\documentclass[twoside]{article}

\usepackage[preprint]{aistats2027}
\usepackage[round]{natbib}

\usepackage{microtype}
\usepackage{graphicx}
\usepackage{subcaption}
\usepackage{booktabs} 
\usepackage{hyperref}
\usepackage{makecell}

\usepackage{amsmath}
\usepackage{amssymb}
\usepackage{mathtools}
\usepackage{amsthm}
\allowdisplaybreaks
\usepackage[capitalize,noabbrev]{cleveref}
\usepackage{multirow}

\usepackage{xcolor}

\theoremstyle{plain}
\newtheorem{theorem}{Theorem}[section]
\newtheorem{proposition}[theorem]{Proposition}
\newtheorem{lemma}[theorem]{Lemma}
\newtheorem{corollary}[theorem]{Corollary}
\theoremstyle{definition}

\newtheorem{assumption}[theorem]{Assumption}
\theoremstyle{remark}

\usepackage{comment}

\renewcommand{\d}{\mathrm{d}}

\newcommand{\PP}{\mathbb{P}}
\newcommand{\RD}{\mathbb{R}^{d}}

\newcommand{\EE}{\mathbb{E}}

\newcommand{\iid}{\overset{\mathrm{i.i.d.}}{\sim}}

\usepackage[textwidth=1.8cm, textsize=tiny]{todonotes}

\begin{document}

%

%

\twocolumn[

\aistatstitle{Learning Transition Kernels of Jump-Diffusion Processes with Conditional Diffusion Models}

\aistatsauthor{ Yuzhen Zhao  \And Yating Liu\And Quentin Guibert }

\aistatsaddress{ CEREMADE, CNRS, Universite Paris-Dauphine, PSL, 75016 Paris, France \\ \texttt{yuzhen.zhao@dauphine.eu; liu@ceremade.dauphine.fr; guibert@ceremade.dauphine.fr}} ]

\begin{abstract}
We study the problem of learning transition kernels for time-homogeneous jump-diffusion processes using conditional diffusion models, with the goal of generating new sample paths from training data consisting of \(N\) independent trajectories observed on a high-frequency discrete time grid. On the theoretical side, we establish non-asymptotic bounds for the conditional score estimation error and for the KL divergence between the laws of the true and generated discretely observed paths. On the numerical side, we first evaluate our method on synthetic data to assess the theoretical findings and benchmark its performance against the approach of \cite{gao2025generating}. We then apply our method to real-world data and investigate its performance on a probabilistic forecasting task.
\end{abstract}


\section{Introduction}


\textbf{Motivation.} Jump-diffusion processes provide a natural framework for modeling stochastic systems characterized by both continuous fluctuations and sudden discontinuous changes. They are widely used across a range of application domains, including mathematical finance and insurance \citep{MERTON1976125,kou2002jump,contFinancialModellingJump2003, asmussenRuinProbabilities2010}, energy markets \citep{cartea2005pricing}, and neuronal dynamics \citep{melanson2019data}. Statistical inference for jump-diffusion processes is therefore a fundamental topic, and has been extensively studied, in particular with classical approaches focusing on nonparametric estimation of the drift, diffusion, and jump characteristics \citep{SCHMISSER2019, amorino2022nonparametric}. Related work has also considered learning the infinitesimal generator from data, see \cite{kostic2024learning} for diffusion processes without jumps.  Recent advances in conditional generative modeling offer an alternative approach by directly learning the transition kernel of the process \citep{liu2025training,gao2025generating}.


\textbf{Model.} 
  On a filtered probability space
\((\Omega,\mathcal F,(\mathcal F_t)_{t\geq 0},\mathbb P)\)
satisfying the usual conditions, we consider an \(\mathbb{R}^d\)-valued 
stochastic process \(X=(X_t)_{t\in[0,T]}\) satisfying the following stochastic differential equation (SDE)
\begin{equation}\label{eq:sde}
    \d X_t  \!  =  \! b(X_{t-})\,\d t  \! + \!  \sigma(X_{t-})\,\d W_t \!  + \!\!  \int_E \!\gamma(X_{t-},z)\,\Pi(\d t,\d z),
\end{equation}
where $X_0$ has initial distribution $\mu_0$, \(W=(W_t)_{t\in[0,T]}\) is a standard  \(d\)-dimensional \((\mathcal F_t)\)-Brownian motion independent of $X_0$, $E\subseteq \RD$ is a Borel set,  $b:\mathbb{R}^d\to\mathbb{R}^d,\, 
\sigma:\mathbb{R}^d\to\mathbb{R}^{d\times d}
$, $
\gamma:\mathbb{R}^d\times E\to\mathbb{R}^d$ are measurable functions,  and \(\Pi(dt,dz)\) is a Poisson random measure on \([0,\infty)\times E\) with $\lambda := \nu(E) < \infty$, independent of \((W,X_0)\), and with compensator \(\d t\,\nu(\d z)\). 



\textbf{Objective.} This paper studies conditional diffusion models for learning the discrete-time transition kernel of \eqref{eq:sde} over a time step \(\Delta > 0\), defined for every \(x\in\mathbb{R}^d\), \(A\in\mathcal{B}(\mathbb{R}^d)\) by
\begin{equation}\label{eq:transition-kernel-def}
\bar{P}_{\Delta}(x,A)
\coloneqq
\mathbb{P}\left(X_{t+{\Delta}}\in A \,|\, X_t=x\right).
\end{equation}
Equivalently, we can reformulate the problem in terms of learning the increment kernel, defined for every \(x\in\mathbb{R}^d\), \(B\in\mathcal{B}(\mathbb{R}^d)\) by
\begin{equation}\label{eq:transition-kernel-bis}
P_{\Delta}(x,B)
\coloneqq
\mathbb{P}\big( (X_{t+{\Delta}}-X_t)\in B \,|\, X_t=x\big),
\end{equation}
since \(P_{\Delta}(x,B)=\bar{P}_{\Delta}(x,x+B)\), where $x+B\coloneqq\{y\in\RD\,\mid\, y-x\in B\}$.  By time homogeneity, neither \(\bar{P}_{\Delta}\) nor \({P}_{\Delta}\) depends on \(t\). Whenever it exists, let \(p_\Delta(\cdot\mid x)\) denote the 
density of \(P_\Delta(x,\cdot)\) with respect to Lebesgue measure. Since this paper focuses on learning the transition kernel, we assume that the initial distribution \(\mu_0\) of \(X_0\) is known.



\textbf{Training data.} For the training data, we assume that \(N \in \mathbb{N}^\star\) i.i.d. sample paths \(X^{(n)}\), \(1\le n\le N\), are available on $[0,T]$, each with the same law as \(X\) and observed at high frequency on the uniform grid
\begin{equation}\label{eq:train-set}
   \bar{X}^{(n)}_{t_0 : t_M} := \big(  X^{(n)}_{t_0=0},  X^{(n)}_{t_1}, \dots,  X^{(n)}_{t_M=T} \big), \;1\!\leq\! n\!\leq \!N, 
\end{equation}
where $t_m\!=\!m \Delta$ with time step $\Delta \!=\! \frac{T}{M}$. For each $n=1,\ldots,N,\, m=0,\ldots,M-1$, we define the observed increment
\begin{equation}
\label{eq:observed-increment}
{Y}_m^{(n)}\coloneqq{X}_{t_{m+1}}^{(n)}-{X}_{t_m}^{(n)}.
\end{equation}
The training set can then be written as
\begin{equation}\label{eq:flattened-data}
\mathcal D_N=
\Big\{
\big({X}_{t_m}^{(n)},{Y}_m^{(n)}\big)_{0\leq m\leq M-1}, \, 
1\leq n\leq N
\Big\}.\nonumber
\end{equation}


\textbf{Related work and positioning.}
Recently, generative models have been increasingly applied to time-series forecasting and generation, see, e.g., \cite{shen2023non,kollovieh2025flow,demarco2026}. Despite their strong empirical performance reported in the above references, establishing theoretical guarantees and convergence rates remains challenging in general time-series settings. More recently, \cite{gao2025generating} initiated a theoretical study of generative modeling for Markovian SDEs. Our work follows this line of research but differs from \cite{gao2025generating} in two main respects. First, while \cite{gao2025generating} focuses on continuous diffusion processes, we consider the class of time-homogeneous jump-diffusion processes with the aim of being able to generate processes with jumps. Second, rather than learning a separate generative model at each time step, we exploit time homogeneity and the Markov property to learn a single conditional generative model for the transition kernel $P_{\Delta}(x, B)$ defined in \eqref{eq:transition-kernel-bis}. This construction allows observations from different time steps to be pooled for training a shared model, thereby  enlarging the effective training dataset and improving numerical performance.

\textbf{Contribution of this paper.}
This paper studies the problem of learning transition kernels for time-homogeneous jump-diffusion processes using conditional diffusion models.

On the theoretical side, we establish non-asymptotic bounds for the conditional score estimation error and for the KL divergence between the laws of the true and generated discretely observed paths.
 
On the numerical side, we first evaluate our method on synthetic data to assess the empirical convergence rate and compare its performance with the approach of \cite{gao2025generating}. We then apply our method to real-world data and investigate its performance on a probabilistic forecasting task.

\textbf{Organization of this paper.}
Section \ref{sec:diff-model} describes the construction of the conditional diffusion model considered in this paper. Section \ref{sec:theoretical-part} establishes theoretical non-asymptotic error bounds with respect to the KL divergence between the laws of the true and generated discretely observed paths. Section \ref{sec:numerical} presents two numerical experiments illustrating the performance of the proposed method. Finally, Section \ref{sec:conclusion} concludes the paper. Additional experimental details and complete proofs are provided in the Supplementary Material.

\textbf{Notation.} For every $n\in\{1, \ldots , N\}$, let $\mathcal Z^{(n)}=( X_{t_m}^{(n)},  Y_{m}^{(n)})_{0\leq m\leq M-1}$ denote the $n$-th observed sample path in the dataset $\mathcal D_N$.  Let $\mu_m$ be the law of ${X}_{t_m}$ for every $m\in\{0,\ldots, M\}$ and let $\bar\mu_M:=\frac1M\sum_{m=0}^{M-1}\mu_m .$ For every bounded measurable function
\(\ell:\mathbb{R}^d\times\mathbb{R}^d\to\mathbb{R}\), we write $\mathbb{E}_{U\sim\bar{\mu}_M} \mathbb{E}_{V\sim P_\Delta(U,\cdot)}  \left[\ell(U,V)\right]  = \frac{1}{M}\sum_{m=0}^{M-1}\mathbb{E}\left[\ell\left(X_{t_m}, X_{t_{m+1}}-X_{t_m}\right)\right].$ 
For a vector $v$ and a matrix $A$, $\|v\|=\|v\|_2$ and  $\|A\|=\|A\|_F$ denote the Euclidean and Frobenius norms respectively, while \(  \|v\|_\infty := \max_i |v_i| \) and \(\|A\|_{\max} := \max_{i,j}|A_{ij}|.\) Moreover, $\|\cdot\|_0$ denotes the number of nonzero entries. 


\section{Conditional Diffusion Models for Increment Generation}\label{sec:diff-model}

We use a conditional diffusion model (see e.g. \cite{song2020score,tang2025score})  to learn and sample from the conditional law $P_{\Delta}(x,\cdot)$ defined in \eqref{eq:transition-kernel-bis}. We present the main steps in this section  for our setting and further details can be found in \cite{song2020score,tang2025score}.



\textbf{Forward noising process.} For a terminal noising time $\mathcal{T}>0$, and for each \(x\in\mathbb{R}^d\), we define the forward noising process 
\(U^x=(U_\tau^x)_{\tau\in[0,\mathcal{T}]}\) as the solution of the variance-preserving (VP) SDE
\begin{equation}\label{eq:forward-noising-process}
    \d U_\tau^x   =-f(\tau)U_\tau^x\,\d \tau+g(\tau)\,\d B_\tau,
\end{equation}
where $U_0^x$ has distribution $P_\Delta(x,\cdot)$, $f(\tau)=\tfrac{1}{2}\beta(\tau)$, $g(\tau)=\sqrt{\beta(\tau)}$, $ \beta(\tau)=a_0\tau+b_0$ with $a_0,b_0>0$, and \(B=(B_\tau)_{\tau\in[0,\mathcal{T}]}\) is a
\(d\)-dimensional Brownian motion independent of \(U_0^x\). Equation \eqref{eq:forward-noising-process} admits an explicit solution and for every fixed \(\tau>0\), 
\begin{equation}
\label{eq:forward-conditional-law}
    U_\tau^x \mid U_0^x=u_0
    \sim
    \mathcal{N}
    \left(
        \alpha_\tau u_0,
        \rho_\tau^2 I_d
    \right),
\end{equation}
with  $\alpha_\tau
   = \exp\left(
    -\frac{a_0}{4}\tau^2
    -\frac{b_0}{2}\tau
    \right),
    \rho_\tau^2
    = 1-\alpha_\tau^2$ (see e.g. \citet[Eq. (16)]{tang2025score}).
    
For \(\tau>0\), let \(q_{\tau\mid0}(u\mid u_0)\) denote the conditional density of \(U_\tau^x\) given \(U_0^x=u_0\). By \eqref{eq:forward-conditional-law},
\begin{equation}
\label{eq:perturbation-kernel}
    q_{\tau\mid0}(u\mid u_0)
    =
    \frac{1}{(2\pi\rho_\tau^2)^{d/2}}
    \exp\left(
        -\frac{
            \lVert u-\alpha_\tau u_0\rVert^2
        }{
            2\rho_\tau^2
        }
    \right).
\end{equation}
Let \(q_\tau(\cdot\mid x)\) denote the density of \(U_\tau^x\).
By conditioning on \(U_0^x\), we obtain
\begin{equation}
\label{eq:noised-conditional-density}
    q_\tau(u\mid x)
    =
    \int_{\mathbb{R}^d}
    q_{\tau\mid0}(u\mid u_0)\,
    P_\Delta(x,\d u_0).
\end{equation}
Whenever \(P_\Delta(x,\cdot)\) admits a density
\(p_\Delta(\cdot\mid x)\), we set  $q_0(u\mid x)=p_\Delta(u\mid x).$ Moreover, for every fixed $\tau>0$, $ U_{\tau}^x$ has the same law as $ \alpha_{\tau}U_0^x + \sqrt{1-\alpha_{\tau}^2}\,\xi$ by \eqref{eq:forward-conditional-law}, where \(\xi \sim\mathcal{N}(0,I_d)\) and is independent of \(U_0^x\). 

Since in \eqref{eq:forward-conditional-law}, \(\alpha_\tau\to0\) and \(\rho_\tau\to1\) as
\(\tau\to\infty\), the terminal distribution $q_{\mathcal T}(\cdot\mid x)$ approaches the standard Gaussian distribution as $\mathcal T$ increases. We therefore use
\(\mathcal{N}(0,I_d)\) as the reference distribution for initializing the reverse-time generative process.

\textbf{Conditional score.}  A key component of the diffusion-model approach is to estimate the conditional score of \(q_\tau(\cdot\mid x)\), defined for \(\tau>0\) by
\begin{equation}
\label{eq:diffusion-score}
    s^\star(\tau,u,x) := \nabla_u\log q_\tau(u\mid x).
\end{equation}

By \eqref{eq:perturbation-kernel}, we have $    \nabla_u\log q_{\tau\mid0}(u\mid u_0) = -\frac{u-\alpha_\tau u_0}{\rho_\tau^2}.$
Conditioning on \(U_\tau^x=u\) yields (see e.g. \citet[Theorem 4.3]{tang2025score})
\begin{equation}
\label{eq:conditional-score-identity}
    s^\star(\tau,u,x)   =  \mathbb{E}  \left[      -\frac{u-\alpha_\tau U_0^x}{\rho_\tau^2}     \,\middle|\,     U_\tau^x=u \right].
\end{equation}

Let \(K\geq1\) be the number of noising-time steps and consider the uniform grid
\begin{equation}
\label{eq:diffusion-time-grid}
    h:=\tfrac{\mathcal{T}}{K},
    \qquad
    \tau_k:=kh,
    \qquad
    k=0,\ldots,K.
\end{equation}
Since \(\rho_{\tau_0}=0\), score estimation is performed at the
positive noising times \(\tau_1,\ldots,\tau_K\). For each \(k=1,\ldots,K\), define $   \xi_k
    :=
    \frac{
        U_{\tau_k}^x-\alpha_{\tau_k}U_0^x
    }{
        \rho_{\tau_k}
    }.$
Then \(\xi_k\sim\mathcal{N}(0,I_d)\) and \(\xi_k\) is independent of
\(U_0^x\). Consequently, 
\begin{equation}\label{eq:score-star-equiv}
    s^\star(\tau_k,u,x) = \mathbb{E}
    \left[
        -\frac{\xi_k}{\rho_{\tau_k}}
        \,\middle|\,
        U_{\tau_k}^x=u
    \right].
    \end{equation}

\textbf{Denoising score-matching loss.}
For any measurable function $    s:
    \{\tau_1,\ldots,\tau_K\}
    \times\mathbb{R}^d
    \times\mathbb{R}^d
    \to\mathbb{R}^d,$
following \citet[Eq. (37)]{tang2025score}, we define the pairwise denoising score-matching (DSM) loss function by 
\begin{align}
\label{eq:pairwise-dsm-loss}
   & \ell^{\mathrm{DSM}}(s;x,y):=\\
  &  \frac{1}{K} \sum_{k=1}^{K} \mathbb{E}_{\xi_k\sim\mathcal{N}(0,I_d)}\!
    \left[
        \left\|
            \rho_{\tau_k}
            s\left(
                \tau_k,
                \alpha_{\tau_k}y+\rho_{\tau_k}\xi_k,
                x
            \right)
            +
            \xi_k
        \right\|_2^2
    \right].\nonumber
\end{align}
Moreover, we define the DSM risk by
\begin{equation}
    \mathcal{L}^{\mathrm{DSM}}(s)
    :=
    \mathbb{E}_{X\sim\bar{\mu}_M}
    \mathbb{E}_{Y\sim P_\Delta(X,\cdot)}
    \left[
        \ell^{\mathrm{DSM}}(s;X,Y)
    \right],\nonumber
\end{equation}
and based on the training data \(\mathcal{D}_N\), the empirical
DSM loss is then defined by
\begin{equation}
\label{eq:empirical-dsm-loss}
    \mathcal{L}^{\mathrm{DSM}}_N(s)
    :=
    \frac{1}{NM}
    \sum_{n=1}^{N}
    \sum_{m=0}^{M-1}
    \ell^{\mathrm{DSM}}
    \left(
        s;
        {X}_{t_m}^{(n)},
        {Y}_m^{(n)}
    \right).
\end{equation}

\textbf{Estimation function class.}
Let $\mathcal{F}(L,\mathbf{p})$ denote the class of functions defined by neural networks with $L$ hidden layers, and layers width given by \(\mathbf p=(p_0,\ldots,p_{L+1})\in\mathbb N^{L+2}\) with
\(p_0=2d\) and \(p_{L+1}=d\). Each $s_\theta \in \mathcal{F}(L,\mathbf{p})$ is a function from   $\{\tau_1,\ldots,\tau_K\}\times\mathbb{R}^d\times\mathbb{R}^d$  
to $\mathbb{R}^d$ defined by 
\begin{align}\label{eq:score-network}
  &   h_0^{(k)}(u,x)  :=  (u,x), \nonumber
    \\
& \text{and for }  0\leq \ell\leq L-1,\nonumber\\
   & h_{\ell+1}^{(k)}(u,x):= \text{ReLU}  \left(       \Gamma_{\ell,k}    \left(        W_\ell h_\ell^{(k)}(u,x)+v_\ell   \right) \right),\nonumber
    \\
   & s_\theta(\tau_k,u,x):=  \Gamma_{L,k}  \left(      W_Lh_L^{(k)}(u,x)+v_L\right),
\end{align}
where 
$
W_\ell\in
\mathbb{R}^{p_{\ell+1}\times p_\ell},\,
v_\ell\in\mathbb{R}^{p_{\ell+1}},\,$ and $
\Gamma_{\ell,k} = \operatorname{diag}(\gamma_{\ell,k}),\,\gamma_{\ell,k}\in\mathbb{R}^{p_{\ell+1}},\,
\ell=0,\ldots,L,\,
k=1,\ldots,K
$. The weights \(W_\ell\) and biases \(v_\ell\) are shared across all diffusion-time indices \(k\), while the diagonal scaling matrices \(\Gamma_{\ell,k}\) depend on \(k\).

Let $F\geq 1$. We define the neural network class
\begin{align}
\label{eq:score-network-class}
&\mathcal{F}(L,\mathbf{p},S,G,F)
:=\Bigg\{s_\theta \in \mathcal{F}(L,\mathbf{p})
\text{ such that } \nonumber\\
&\max_{1\leq k\leq K}
\sup_{(u,x)\in\mathbb{R}^{2d}}
\left\|
    s_\theta(\tau_k,u,x)
\right\|_\infty
\leq F,\nonumber
\\
& 
\max_{0\leq\ell\leq L}
\left(
    \lVert W_\ell\rVert_{\max}
    \vee
    \lVert v_\ell\rVert_\infty
\right)
\leq1,
\\
& \max_{\substack{
    0\leq\ell\leq L\\
    1\leq k\leq K
}}
\lVert\Gamma_{\ell,k}\rVert_{\max}
\leq G,
\quad
\sum_{\ell=0}^{L}
\left(
    \lVert W_\ell\rVert_0
    +
    \lVert v_\ell\rVert_0
\right)
\leq S
\Bigg\}.\nonumber
\end{align}
Since each \(\Gamma_{\ell,k}\) is diagonal, the total number of
trainable parameters is bounded by $   S_{\mathrm{tot}}  :=S+K\sum_{\ell=0}^{L}p_{\ell+1}.$

For any measurable score function  \(s : \{\tau_1,\ldots, \tau_K\} 
\times\mathbb R^d\times\mathbb R^d
\rightarrow
\mathbb R^d\), define its score estimation
error by
\begin{align}
\label{eq:score-estimation-error}
 &   \mathcal{E}(s) :=  \frac{1}{K}  \sum_{k=1}^{K}
 \mathbb{E}_{{X\sim\bar{\mu}_M},{U_{\tau_k}^X\sim q_{\tau_k}(\cdot\mid X)}} \Big[ \nonumber\\
 &\hspace{1.8cm}      \left\|       s(\tau_k,U_{\tau_k}^X,X)    -  s^\star(\tau_k,U_{\tau_k}^X,X)  \right\|_2^2 \Big].
\end{align}

Let
$s_{\widehat{\theta}_N}
\in
\mathcal{F}(L,\mathbf{p},S,G,F)$
denote the score estimator obtained by minimizing the empirical DSM loss \(\mathcal{L}^{\mathrm{DSM}}_N\).  
Recall that \(s_{\widehat{\theta}_N}\) depends on the training sample \(\mathcal{D}_N\). A key step in the theoretical analysis is therefore to establish an upper bound on the expected score estimation error $\mathbb{E}_{\mathcal{D}_N}\big[\mathcal{E}\big(s_{\widehat{\theta}_N}\big)\big],$ 
as detailed in Section  \ref{subsec:global-score-error}.

\textbf{Reverse process and recursive path generation}  Given \(\mathcal D_N\) and \(x\in\mathbb R^d\), let $\widehat Z_0^x\sim\mathcal N(0,I_d),
$
and for \(k=0,\ldots,K-1\), set
\begin{align}
\label{eq:reverse-euler-sampler}
\widehat Z_{k+1}^x
=&
\widehat Z_k^x+g(\tau_{K-k})\sqrt h\,\zeta_k+
\\
&
\left[
f(\tau_{K-k})\widehat Z_k^x
+
g^2(\tau_{K-k})
s_{\hat\theta_N}(\tau_{K-k},\widehat Z_k^x,x)
\right]h, \nonumber
\end{align}
where $f,g$ are defined in \eqref{eq:forward-noising-process}, \(\zeta_0,\ldots,\zeta_{K-1} \iid \mathcal N(0,I_d)\), independent of \(\mathcal D_N\) and
\(\widehat Z_0^x\)  \cite[Eq. (72)]{tang2025score}. For every $x\in\RD$ and $B\in\mathcal{B}(\RD)$, define
\begin{equation}
\label{eq:generated-increment-law}
\widehat Q_{N,\Delta}(B\mid x)
:=
\PP\big(\widehat Z_K^x\in B\mid x,\mathcal D_N\big).
\end{equation} 
Then $\widehat Q_{N,\Delta}(B\mid x)$ provides an estimator of the increment kernel $P_{\Delta}(x, B)$ defined by \eqref{eq:transition-kernel-bis}. 

Given the estimated increment kernel  $\widehat Q_{N,\Delta}(B\mid x)$, a new discrete sample path can be generated recursively, conditional on the training data $\mathcal{D}_N$. First, sample $ \widehat{X}_{t_0}\sim\mu_0.$ Then, for \(m=0,\ldots,M-1\), sample
\begin{equation}\label{eq:construction-new-path}
     \widehat{Y}_m\mid\widehat{X}_{t_m}\!\sim\widehat{Q}_{N,\Delta}\big(   \cdot\mid\!\widehat{X}_{t_m}\big),
    \text{ and set } \widehat{X}_{t_{m+1}}\! =\!\widehat{X}_{t_m}+\widehat{Y}_m.   
\end{equation}
Thus, we obtain a generated discrete sample path $(\widehat X_{t_0},\ldots,\widehat X_{t_M}).$



\section{Theoretical Analysis}\label{sec:theoretical-part}

Let $P_{0:M}$
denote the law of the true discretely observed path $( X_{t_0},\ldots,X_{t_M} )$, and let $\widehat{P}_{0:M}^{\,N}$
denote the law of the generated path $\big(\widehat{X}_{t_0},\ldots,\widehat{X}_{t_M}\big)$ conditional on the training data $\mathcal{D}_N$.  For two probability measures \(\mu,\nu\), recall that the Kullback--Leibler (KL) divergence is defined by
\[
\mathrm{KL}(\mu\|\nu):=\begin{cases}
\int \log\big(\tfrac{d\mu}{d\nu}\big)\,\d \mu,& \mu\ll\nu,\\
+\infty,& \text{otherwise}.
\end{cases}
\]
Since \(\widehat P_{0:M}^{\,N}\) depends on \(\mathcal D_N\), we measure the path-level approximation error through the expected KL divergence
\begin{equation}
\label{eq:main-risk}
\mathcal R_N
:=
\mathbb E_{\mathcal D_N}
\left[
\mathrm{KL}
\left(
P_{0:M}
\middle\|
\widehat P_{0:M}^{\,N}
\right)
\right].
\end{equation}
The goal of this section is to then derive a non-asymptotic upper bound of $\mathcal R_N$. 

This section is organized as follows. We first state the assumptions in Section \ref{subsec:assumption} and introduce localization and clipping of the score in Section \ref{subsec:local-clip}. We then establish the clipped and global score estimation error bound in Section \ref{subsec:global-score-error}, and use it to control the KL divergence between the true and generated path laws in Section \ref{subsec:KL-gene-path}. All detailed proofs are provided in the Supplementary Material.

\subsection{Assumptions}\label{subsec:assumption}

\begin{assumption}
\label{ass:sde-wellposed} 
$\EE[\|X_0\|^2]<\infty$ and $\nu(E)<\infty$. 
The coefficient functions \(b\) and \(\sigma\) are globally Lipschitz continuous and there exists a measurable function $L_\gamma : E\rightarrow [0,+\infty)$ such that, for every $x,y\in\RD, \,z\in E$, 
\[\Vert \gamma (x,z)-\gamma(y,z)\Vert \leq L_\gamma(z) \Vert x-y\Vert.\]
Moreover, $\int_E
\bigl[L_\gamma(z)^2+\|\gamma(0,z)\|^2\bigr]
\,\nu(\d z)<\infty.$
\end{assumption}

Assumption~\ref{ass:sde-wellposed} ensures that
SDE~\eqref{eq:sde} admits a unique nonexplosive adapted
c\`adl\`ag strong solution (see e.g. \citet[Theorem 6.2.3 and Theorem 6.2.9]{MR2512800}). The following lemma is a direct consequence of
\citet[Corollary~6.2.4]{MR2512800} and establishes properties needed to control the localization tails in the following
Section~\ref{subsec:global-score-error}.
\begin{lemma}
\label{lem:coefficient-increment-integrability}
 Suppose Assumption~\ref{ass:sde-wellposed} holds. 
Then, for the fixed grid \(t_m=m\Delta\), the averaged state and increment second moments are finite, i.e.,
\begin{align}
\label{eq:state-second-moment}
&M_{X,2}
:=\int_{\mathbb R^d}
\|x\|^2\,\bar\mu_M(\d x)
<\infty, \\
&M_2
:=
\int_{\mathbb R^d}
\int_{\mathbb R^d}
\|y\|^2 P_\Delta(x,dy)\,\bar\mu_M(\d x)
<\infty. \label{eq:increment-second-moment}
\end{align}

\end{lemma}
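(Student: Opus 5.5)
The plan is to reduce both claims to a uniform second-moment bound on the solution over $[0,T]$. The key input is \citet[Corollary~6.2.4]{MR2512800}: under Assumption~\ref{ass:sde-wellposed}, together with $\EE[\|X_0\|^2]<\infty$, it gives
\[
\EE\Big[\sup_{0\le t\le T}\|X_t\|^2\Big]\le C_T\big(1+\EE[\|X_0\|^2]\big)<\infty .
\]
If the corollary is only stated for non-random initial conditions, I will condition on $X_0=x_0$ and integrate against $\mu_0$. This works because the constant is affine in $\|x_0\|^2$ and $W$ and $\Pi$ are independent of $X_0$. The constant depends on the Lipschitz constants of $b$ and $\sigma$, on $\int_E L_\gamma^2\,\d\nu$, on $\int_E\|\gamma(0,z)\|^2\nu(\d z)$ and on $T$.

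Should a self-contained check be needed, I would derive the bound as follows. Write the jump integral as a compensated integral plus the drift $\int_E\gamma(X_{t-},z)\nu(\d z)\,\d t$. By Cauchy--Schwarz and $\nu(E)<\infty$, this drift satisfies
\[
\Big\|\int_E\gamma(x,z)\,\nu(\d z)\Big\|^2\le \lambda\int_E\|\gamma(x,z)\|^2\,\nu(\d z)\le 2\lambda\int_E\big(L_\gamma(z)^2\|x\|^2+\|\gamma(0,z)\|^2\big)\nu(\d z),
\]
so it has linear growth. I would then apply the Burkholder--Davis--Gundy inequality to the Brownian and compensated Poisson martingale terms, and close with Gronwall's inequality after localizing with stopping times. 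This is the only step with real content, and it is exactly what the cited corollary provides.

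Given the sup bound, both claims are immediate. For \eqref{eq:state-second-moment}, the definition of $\bar\mu_M$ gives
\[
M_{X,2}=\frac1M\sum_{m=0}^{M-1}\EE\|X_{t_m}\|^2\le \EE\Big[\sup_{t\le T}\|X_t\|^2\Big]<\infty .
\]
For \eqref{eq:increment-second-moment}, I use the averaging identity from the Notation paragraph with $\ell(u,v)=\|v\|^2$. Since this $\ell$ is unbounded, I justify the identity by monotone convergence applied to $\ell\wedge n$. This gives
\[
M_2=\frac1M\sum_{m=0}^{M-1}\EE\|X_{t_{m+1}}-X_{t_m}\|^2\le \frac2M\sum_{m=0}^{M-1}\big(\EE\|X_{t_{m+1}}\|^2+\EE\|X_{t_m}\|^2\big)\le 4\,\EE\Big[\sup_{t\le T}\|X_t\|^2\Big]<\infty .
\]

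I expect no genuine obstacle beyond matching the hypotheses of the cited corollary. The square-integrability of $L_\gamma$ and $\gamma(0,\cdot)$ is precisely what supplies the linear-growth and Lipschitz conditions in $L^2(\nu)$ that the corollary requires.
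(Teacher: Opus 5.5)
Your proposal is correct and follows the paper's proof essentially verbatim. The paper likewise invokes the cited Corollary~6.2.4 to get a uniform second-moment bound on $[0,T]$, bounds $M_{X,2}$ directly by it, and bounds $M_2$ via $\|a-b\|^2\le 2\|a\|^2+2\|b\|^2$ to get the factor $4$. The only differences are cosmetic: you state the stronger form $\EE[\sup_{t\le T}\|X_t\|^2]<\infty$ where the paper only uses $\sup_{t\le T}\EE\|X_t\|^2<\infty$, and you add an optional BDG/Gronwall sketch and a monotone-convergence remark for the averaging identity.
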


\begin{assumption}
\label{ass:increment-density-regularity}
For the fixed observation step size \(\Delta>0\), the increment kernel
\(P_\Delta\) admits a  density function $p_\Delta$ w.r.t the Lebesgue measure, that is, 
\[
P_\Delta(x,A)
=
\int_A p_\Delta(y\mid x)\,\d y,
\qquad
A\in\mathcal B(\mathbb R^d).
\]
For \(\bar\mu_M\)-almost every \(x\in\mathbb R^d\), the following hold:
\begin{enumerate}
\item
The map \(y\mapsto p_\Delta(y\mid x)\) is twice continuously differentiable
on \(\mathbb R^d\), and $p_\Delta(y\mid x)>0$ for $y\in\mathbb R^d.$

\item
The conditional increment score
\[
s_\Delta^\star(y,x)
:=
\nabla_y\log p_\Delta(y\mid x)
\]
is Lipschitz continuous in \(y\). That is, there exists a measurable function
\(L_\Delta:\mathbb R^d\to[0,\infty)\) such that
\[
\left\|
s_\Delta^\star(y,x)-s_\Delta^\star(y',x)
\right\|
\le
L_\Delta(x)\|y-y'\|,
y,y'\in\mathbb R^d .
\]
Moreover, $L_1
:=\int_{\mathbb R^d} L_\Delta(x)\,\bar\mu_M(\d x)
<\infty .$
\end{enumerate}
\end{assumption}

Assumption~\ref{ass:increment-density-regularity} is imposed
to apply \citet[Proposition~2]{gao2025generating}.
A simple example satisfying this assumption is the
one-dimensional additive jump-diffusion $\d X_t=b\,\d t+\sigma \,\d W_t+c\,\d N_t$, where $b,c\in\mathbb R$ and $\sigma>0$ are constants, and $N$ is a Poisson
process with intensity $\lambda>0$, independent of the Brownian motion $W$. 





\subsection{Localization and Clipping}\label{subsec:local-clip}
To control the global score estimation error, we first study the error
of the clipped score estimator on a bounded region.  For \(R>0\), we define $\mathcal X_R
:=
\left\{
x\in\mathbb R^d:\|x\|\le R
\right\}$ and $\mathcal U_R
:=
\left\{
u\in\mathbb R^d:\|u\|\le R
\right\}.$ The localization region is then defined by
\[
\mathsf D_R
:=
[\tau_1,\mathcal{T}]\times\mathcal U_R\times\mathcal X_R.
\]
Moreover, set $\rho_{\min}
:=
\min_{1\le k\le K}\rho_{\tau_k}>0.$
For \(k=1,\ldots,K\), define the clipped score by
\begin{align}
\label{eq:clipped-population-score}
&s_F^\star(\tau_k,u,x)
:=
\\
&\mathbb E
\left[
\left(
1\wedge
\frac{F\rho_{\tau_k}}{\|\xi\|}
\right)
\left(
-\frac{\xi}{\rho_{\tau_k}}
\right)
\middle|
\alpha_{\tau_k}Y+\rho_{\tau_k}\xi=u,\ X=x
\right],\nonumber
\end{align}
where $X$ is sampled w.r.t. $\bar\mu_M$, $Y\,|\,X=x\sim P_\Delta(x,\cdot),$ $\xi\sim\mathcal N(0,I_d),$ and \(\xi\) is independent of \((X,Y)\). By construction,
\[
\left\|
s_F^\star(\tau_k,u,x)
\right\|
\le F.
\]
For any measurable score function
\[
s:
\{\tau_1,\ldots,\tau_K\}
\times\mathbb R^d\times\mathbb R^d
\longrightarrow
\mathbb R^d,
\]
define the localized clipped score estimation error by
\begin{align}
\label{eq:localized-clipped-score-error}
&\mathcal E_{R,F}(s)
:=
\frac1K
\sum_{k=1}^K
\mathbb E_{X\sim\bar\mu_M}
\mathbb E_{U_{\tau_k}^X\mid X\sim q_{\tau_k}(\cdot\mid X)}
\bigg[\\
&\left\|
s(\tau_k,U_{\tau_k}^X,X)
-
s_F^\star(\tau_k,U_{\tau_k}^X,X)
\right\|^2
\mathbf 1_{\{
(\tau_k,U_{\tau_k}^X,X)\in\mathsf D_R
\}}
\bigg].\nonumber
\end{align}
We now define the clipping term $\mathcal C_{R,F}$ and tail term $\mathcal T_{R,F}$ as follows.\begin{align}
\label{eq:clipping-bias}
&\mathcal C_{R,F}
:=
\frac1K
\sum_{k=1}^K
\mathbb E_{X\sim\bar\mu_M}
\mathbb E_{U_{\tau_k}^X\mid X\sim q_{\tau_k}(\cdot\mid X)}
\bigg[\\
&
\left\|
s_F^\star(\tau_k,U_{\tau_k}^X,X)
-
s^\star(\tau_k,U_{\tau_k}^X,X)
\right\|^2 
\mathbf 1_{\{
(\tau_k,U_{\tau_k}^X,X)\in\mathsf D_R
\}}
\bigg],\nonumber\\
\label{eq:tail-term}
&\mathcal T_{R,F}
:=
\frac1K
\sum_{k=1}^K
\mathbb E_{X\sim\bar\mu_M}
\mathbb E_{U_{\tau_k}^X\mid X\sim q_{\tau_k}(\cdot\mid X)}
\bigg[\\
&\quad 
\left(
2dF^2
+
2
\left\|
s^\star(\tau_k,U_{\tau_k}^X,X)
\right\|^2
\right) 
\mathbf 1_{\{
(\tau_k,U_{\tau_k}^X,X)\notin\mathsf D_R
\}}
\bigg].\nonumber
\end{align}
Localization and clipping allow us to decompose the score estimation error $\mathcal E(s)$ into several terms that can be controlled separately. More precisely, we have the following lemma.
\begin{lemma}
\label{lem:tail-clipping-reduction}
For every $s\in\mathcal F(L,\mathbf p,S,G,F)$,
\begin{equation}
\label{eq:tail-clipping-reduction}
\mathcal E(s)
\le
2\mathcal E_{R,F}(s)+2\,\mathcal C_{R,F}+\mathcal T_{R,F}.
\end{equation}
\end{lemma}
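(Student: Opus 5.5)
The plan is to split the expectation defining $\mathcal E(s)$ according to whether $(\tau_k,U_{\tau_k}^X,X)$ lies in $\mathsf D_R$ or not, and to bound each piece pointwise. All terms are expectations of nonnegative quantities, so linearity over the two complementary indicator functions gives
\[
\mathcal E(s)=\frac1K\sum_{k=1}^K\mathbb E\Big[\|s-s^\star\|^2\mathbf 1_{\mathsf D_R}\Big]+\frac1K\sum_{k=1}^K\mathbb E\Big[\|s-s^\star\|^2\mathbf 1_{\mathsf D_R^c}\Big],
\]
where $s,s^\star$ are evaluated at $(\tau_k,U_{\tau_k}^X,X)$. Note that $\tau_k\in[\tau_1,\mathcal T]$ always holds, so membership in $\mathsf D_R$ is really a condition on $(U_{\tau_k}^X,X)$. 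This causes no difficulty.

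On $\mathsf D_R$, insert the clipped score $s_F^\star$ and use the elementary inequality $\|a+b\|^2\le 2\|a\|^2+2\|b\|^2$ with $a=s-s_F^\star$ and $b=s_F^\star-s^\star$. Multiplying by the indicator and averaging, the first part is bounded by $2\mathcal E_{R,F}(s)+2\mathcal C_{R,F}$, exactly as in \eqref{eq:localized-clipped-score-error} and \eqref{eq:clipping-bias}. The only check needed is that $s_F^\star$ is well defined and measurable, which follows from its definition as a conditional expectation of a bounded variable.

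Off $\mathsf D_R$, use the same inequality with $a=s$ and $b=-s^\star$, giving $\|s-s^\star\|^2\le 2\|s\|^2+2\|s^\star\|^2$. Since $s\in\mathcal F(L,\mathbf p,S,G,F)$, we have $\|s(\tau_k,u,x)\|_\infty\le F$ for all $(u,x)$, hence $\|s\|_2^2\le dF^2$ and $2\|s\|^2\le 2dF^2$. The resulting integrand is therefore dominated by $(2dF^2+2\|s^\star\|^2)\mathbf 1_{\mathsf D_R^c}$. Averaging gives $\mathcal T_{R,F}$ as defined in \eqref{eq:tail-term}, and summing the two pieces yields \eqref{eq:tail-clipping-reduction}.

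There is no genuine obstacle here. The only subtlety is integrability: if some term is infinite (for instance $\mathcal T_{R,F}$), the inequality holds trivially. Otherwise all manipulations involve nonnegative integrands, so splitting expectations is legitimate with no finiteness assumptions needed beyond those in the statement.
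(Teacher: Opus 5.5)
Your proposal is correct and matches the paper's proof: split on $\{(\tau_k,U_{\tau_k}^X,X)\in\mathsf D_R\}$ and its complement, apply $\|a+b\|^2\le 2\|a\|^2+2\|b\|^2$ with $s_F^\star$ inserted on $\mathsf D_R$, and on the complement use the network bound $\|s\|^2\le dF^2$ to get $2dF^2+2\|s^\star\|^2$. Your remarks on measurability and on the trivial case where a term is infinite are harmless additions that the paper leaves implicit.
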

In the next subsection, we derive an upper bound for each term on the right-hand side of \eqref{eq:tail-clipping-reduction}.


\subsection{Clipped and Global Score Error}\label{subsec:global-score-error}

Recall that
\(s_{\widehat{\theta}_N}\in\mathcal{F}(L,\mathbf{p},S,G,F)\)
denotes the score estimator obtained by minimizing the empirical DSM loss
\(\mathcal{L}^{\mathrm{DSM}}_N\) defined by \eqref{eq:empirical-dsm-loss}. 
The goal of this subsection is to derive an upper bound for its global score estimation error. We proceed in two steps. First, we control the localized clipped score error
\(\mathbb{E}_{\mathcal D_N}\!\big[\mathcal E_{R,F}(s_{\widehat{\theta}_N})\big]\).
To this end, we introduce a localized excess-risk functional $\mathcal{G}_s^w$ and derive an oracle-type bound that decomposes the error into approximation, optimization, and statistical estimation terms (see Proposition \ref{prop:localized-score-oracle}). We then control the error introduced by clipping and localization through the clipping and tail terms respectively (see  Lemma \ref{lem:clipping-tail-control}), which allows us to pass from the localized clipped error to the global score estimation error (see  Theorem \ref{thm:score-error-bound}).

Let $\mathsf{T}_{F,k}(\xi)\coloneqq \big(1\wedge\frac{F\rho_{\tau_k}}{\|\xi\|}\big)\big(-\frac{\xi}{\rho_{\tau_k}}\big)$ denote the clipped regression target appearing in \eqref{eq:clipped-population-score}. For a fixed observed path $z\!=\!(x_m, y_m)_{m=0}^{M-1},$ we define the noisy input by $U_{m,k}(\xi)=\alpha_{\tau_k}y_m+\rho_{\tau_k}\xi$, and define the weighted localized excess loss of $s$ relative to $s_F^{\star}$ by
\begin{equation}
\label{eq:block-excess}
\begin{aligned}
\mathcal{G}_s^w(z):=&\frac1{MK}\sum_{m=0}^{M-1}\sum_{k=1}^K\rho_{\tau_k}^2\mathbb E_{\xi\sim\mathcal N(0,I_d)}\Big[\nonumber\\
&\Big(\left\|s(\tau_k,U_{m,k}(\xi),x_m)-\mathsf{T}_{F,k}(\xi)\right\|^2\\
&\quad-\left\|s_F^\star(\tau_k,U_{m,k}(\xi),x_m)-\mathsf{T}_{F,k}(\xi)\right\|^2
\Big)\nonumber\\
&\times\mathbf 1_{\{(\tau_k,U_{m,k},x_m)\in\mathsf D_R\}}\Big].
\end{aligned}
\end{equation}
Since $s_{\hat\theta_N}$ obtained by numerical optimization may not be an exact minimizer of the localized empirical excess loss, we introduce an optimization error term to quantify the gap between the empirical criterion achieved by $s_{\hat\theta_N}$ and the best value attainable over the network class $\mathcal F(L,\mathbf p,S,G,F)$. More precisely, we define
\begin{align}\label{eq:expected-localized-optimization-error}
\Psi_{N,R,F}^{w}(s_{\hat\theta_N})
:=&
\mathbb E_{\mathcal D_N}
\Big[
\frac1N
\sum_{n=1}^N
\mathcal{G}_{s_{\hat\theta_N}}^w(\mathcal Z^{(n)})
\\
&-
\inf_{s\in\mathcal F(L,\mathbf p,S,G,F)}
\frac1N
\sum_{n=1}^N
\mathcal{G}_s^w(\mathcal Z^{(n)})
\Big].\nonumber
\end{align}
The next proposition provides an upper bound for the localized clipped score error $\mathbb E_{\mathcal D_N}
\left[\mathcal E_{R,F}(s_{\hat\theta_N})\right]$. 
\begin{proposition}
\label{prop:localized-score-oracle}
Assume Assumptions \ref{ass:sde-wellposed} holds. Let $ \mathsf{p}_{\max}\coloneqq \max_{0\le\ell\le L+1}p_\ell$ and let $\displaystyle \Lambda_{N,R,F}\coloneqq \log\big(
C N d\big[
F^2+F(1+R)(1\vee G)(L+1)
(1+\mathsf{p}_{\max})
\big]
\big)$
There exists a universal constant $C>0$ such that
\begin{align}
&\mathbb E_{\mathcal D_N}
\left[\mathcal E_{R,F}(s_{\hat\theta_N})\right]
\le
\frac{C}{\rho_{\min}^2}\Big[
\inf_{s\in\mathcal F(L,\mathbf p,S,G,F)}
\mathcal E_{R,F}(s)
\nonumber\\
&\hspace{1cm}+\Psi_{N,R,F}^w(s_{\hat\theta_N})+dF^2\frac{S_{\mathrm{tot}}(L+1)}{N}\Lambda_{N,R,F}
\Big].\nonumber
\end{align}
\end{proposition}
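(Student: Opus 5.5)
The plan is a standard empirical-risk-minimization oracle inequality for a bounded least-squares regression with dependent-within-path but i.i.d.-across-path samples, using the $N$ paths $\mathcal Z^{(n)}$ as the i.i.d. units.

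\textbf{Step 1: population identity.} Since $s_F^\star(\tau_k,u,x)$ is the conditional expectation of $\mathsf T_{F,k}(\xi)$ given $U_{m,k}=u$ and $X_{t_m}=x$, the cross term vanishes after conditioning. This gives
\[
\mathbb E\big[\mathcal G_s^w(\mathcal Z)\big]=\frac1K\sum_{k=1}^K\rho_{\tau_k}^2\,\mathbb E\big[\|s-s_F^\star\|^2\mathbf 1_{\mathsf D_R}\big].
\]
The indicator depends only on $(U,X)$, so it passes through the conditioning, and averaging over $m$ produces $\bar\mu_M$. The right-hand side is therefore at least $\rho_{\min}^2\,\mathcal E_{R,F}(s)$ and at most $\mathcal E_{R,F}(s)$, since $\rho_\tau\le 1$. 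This is where the factor $\rho_{\min}^{-2}$ comes from.

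\textbf{Step 2: oracle decomposition.} For any fixed $\bar s$ in the class, write
\[
\mathbb E\,\mathcal G_{\hat s}=\Big(\mathbb E\,\mathcal G_{\hat s}-2\,\tfrac1N\sum_n\mathcal G_{\hat s}(\mathcal Z^{(n)})\Big)+2\Big(\tfrac1N\sum_n\mathcal G_{\hat s}-\inf_{s}\tfrac1N\sum_n\mathcal G_s\Big)+2\,\tfrac1N\sum_n\mathcal G_{\bar s}.
\]
Taking $\mathbb E_{\mathcal D_N}$, the middle term is $2\Psi^w_{N,R,F}$. The last term has expectation $2\,\mathbb E\,\mathcal G_{\bar s}\le 2\,\mathcal E_{R,F}(\bar s)$, and taking the infimum over $\bar s$ gives the approximation term.

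\textbf{Step 3: the statistical term, which is the main obstacle.} We need a fast-rate bound on $\mathbb E\sup_s\big(\mathbb E\,\mathcal G_s-2\,\widehat{\mathbb E}_N\mathcal G_s\big)$. The summands $\mathcal G_s(\mathcal Z^{(n)})$ are i.i.d. in $n$ and bounded by $C\,dF^2$: both $s$ and $\mathsf T_{F,k}$ have sup norm at most $F$ per coordinate, and $\rho^2\le 1$. A Bernstein-type variance condition also holds. Writing $a=s-\mathsf T$ and $b=s_F^\star-\mathsf T$, we have $\|a\|^2-\|b\|^2=(s-s_F^\star)\cdot(a+b)$. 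Cauchy--Schwarz and Jensen over $m,k,\xi$ then give $\mathbb E[\mathcal G_s^2]\le C\,dF^2\,\mathbb E\,\mathcal G_s$, using Step 1 again since $\rho^2\le\rho$.

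I would then apply the usual offset/Bernstein union bound over a $\delta$-cover of the class in sup norm on $\mathsf D_R$. This yields a bound of order $dF^2\log\mathcal N(\delta)/N+\delta F$. With $\delta\approx 1/N$, it suffices to bound the covering number. Parameters lie in $[-1,1]$ for $W_\ell,v_\ell$ and in $[-G,G]$ for $\Gamma_{\ell,k}$, with at most $S_{\mathrm{tot}}$ active entries. Inputs satisfy $\|(u,x)\|_\infty\le R$. A layerwise Lipschitz-in-parameters argument, as in Schmidt--Hieber, then gives
\[
\log\mathcal N(\delta)\lesssim S_{\mathrm{tot}}(L+1)\log\big((1+R)(1\vee G)(1+\mathsf p_{\max})/\delta\big).
\]
Combined with $\delta\sim 1/(NdF)$, this produces $\Lambda_{N,R,F}$.

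The delicate parts of Step 3 are the $k$-dependent diagonal scalings, whose sparsity is not controlled and whose count is therefore bounded by the full $K\sum_\ell p_{\ell+1}$, and tracking the $G^{L+1}$ growth inside the logarithm so that it yields the factor $(L+1)$.

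\textbf{Step 4: conclusion.} Combining Steps 1--3 and multiplying by $\rho_{\min}^{-2}$ gives the stated bound.
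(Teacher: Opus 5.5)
Your proposal is correct and follows the paper's proof essentially step for step. The steps are the same: the population identity with the sandwich $\rho_{\min}^2\mathcal E_{R,F}(s)\le\mathsf P\mathcal G_s^w\le\mathcal E_{R,F}(s)$; the factor-$2$ oracle decomposition producing $\Psi^w_{N,R,F}$ and the approximation term; a Bernstein/union bound over a $1/N$-scale sup-norm net, using $\|\mathcal G_s^w\|_\infty\lesssim dF^2$ and $\mathsf P[(\mathcal G_s^w)^2]\lesssim dF^2\,\mathsf P\mathcal G_s^w$; and a parameter-discretization covering bound with $S_{\mathrm{tot}}$ free coordinates, where the $B^{L+1}$ Lipschitz growth gives the $(L+1)$ factor. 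The only differences are cosmetic: you derive the variance condition from the pointwise identity $\|a\|^2-\|b\|^2=\langle s-s_F^\star,a+b\rangle$ rather than the paper's conditional orthogonality $D_k=\|a_k\|^2+2\langle a_k,r_k\rangle$, your Step 2 display should read $\le$ rather than $=$, and the net discretization error is of order $dF\delta$, not $\delta F$ (your choice $\delta\sim 1/(NdF)$ already absorbs this).
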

The following lemma controls the clipping bias and localization tail error.
\begin{lemma}
\label{lem:clipping-tail-control}
Under Assumption~\ref{ass:sde-wellposed}, there exists a universal
constant $C>0$ such that 
\begin{align}
  & \mathcal C_{R,F}
\!\le\!
C\rho_{\min}^{-2}d
\exp(
-\tfrac14\max\{F\rho_{\min}-\sqrt d,0\}^2
), \nonumber\\
&\mathcal T_{R,F}
\le C\big[ dF^2(M_{X,2}+M_2+d)R^{-2}\nonumber
\\\label{eq:tail-upper-bound}
&\hspace{2cm} +\rho_{\min}^{-2}d
(M_{X,2}+M_2+d)^{1/2}R^{-1}
\big].
\end{align}



\end{lemma}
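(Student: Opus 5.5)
The plan treats the two bounds separately. Both rest on the representation \eqref{eq:score-star-equiv} of the score as a conditional expectation of $-\xi/\rho_{\tau_k}$, together with $\rho_{\tau_k}\ge\rho_{\min}$.

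\emph{Clipping term.} For fixed $k$ and $(u,x)$, the definitions \eqref{eq:clipped-population-score} and \eqref{eq:score-star-equiv} give
\[
s_F^\star-s^\star=\mathbb E\big[\mathsf T_{F,k}(\xi)+\xi/\rho_{\tau_k}\mid U_{\tau_k}^X=u,X=x\big],
\]
and $\|\mathsf T_{F,k}(\xi)+\xi/\rho_{\tau_k}\| = (\|\xi\|-F\rho_{\tau_k})_+/\rho_{\tau_k}$. By conditional Jensen, dropping the indicator of $\mathsf D_R$ and using the tower property (the joint law of $(X,U_{\tau_k}^X,\xi)$ matches the construction, with $\xi$ independent of $(X,Y)$), each summand of $\mathcal C_{R,F}$ is at most
\[
\rho_{\min}^{-2}\,\mathbb E\big[(\|\xi\|-F\rho_{\min})_+^2\big].
\]
Set $a:=F\rho_{\min}$. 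By Gaussian concentration of the $1$-Lipschitz map $\xi\mapsto\|\xi\|$ around a median or mean at most $\sqrt d$,
\[
\PP(\|\xi\|\ge \sqrt d+r)\le e^{-r^2/2}.
\]
Integrate $\mathbb E[(\|\xi\|-a)_+^2]=\int_0^\infty 2t\,\PP(\|\xi\|>a+t)\,\d t$. When $a\ge\sqrt d$, this gives $\mathbb E[(\|\xi\|-a)_+^2]\lesssim e^{-(a-\sqrt d)^2/2}\le e^{-(a-\sqrt d)^2/4}$. When $a<\sqrt d$, use the trivial bound $\mathbb E\|\xi\|^2=d$. This produces the claimed $C\rho_{\min}^{-2}d\exp(-\tfrac14\max\{F\rho_{\min}-\sqrt d,0\}^2)$. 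The factor $d$ and the exponent $1/4$ leave room for a Cauchy--Schwarz step, $\mathbb E[\|\xi\|^2\mathbf 1_{\|\xi\|>a}]\le \sqrt{3d^2}\,\PP(\|\xi\|>a)^{1/2}$, if that is cleaner.

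\emph{Tail term.} The event $(\tau_k,U,X)\notin\mathsf D_R$ is contained in $\{\|X\|>R\}\cup\{\|U_{\tau_k}^X\|>R\}$. The second moment of $U=\alpha Y+\rho\xi$ satisfies $\mathbb E\|U\|^2\le 2M_2+2d$, and Lemma~\ref{lem:coefficient-increment-integrability} gives finiteness of $M_{X,2}$ and $M_2$. Markov's inequality then yields
\[
\PP(\text{outside})\le (M_{X,2}+2M_2+2d)R^{-2}.
\]
This handles the $2dF^2$ part directly and gives the first term. For the score part, apply Cauchy--Schwarz:
\[
\mathbb E\big[\|s^\star\|^2\mathbf 1_{\text{out}}\big]\le \big(\mathbb E\|s^\star\|^4\big)^{1/2}\PP(\text{out})^{1/2}.
\]
By conditional Jensen applied to \eqref{eq:score-star-equiv}, $\|s^\star\|^4\le \mathbb E[\|\xi\|^4\mid\cdot]/\rho_{\min}^4$, and $\mathbb E\|\xi\|^4=d^2+2d$. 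So $(\mathbb E\|s^\star\|^4)^{1/2}\lesssim d\rho_{\min}^{-2}$, and $\PP^{1/2}$ supplies $(M_{X,2}+M_2+d)^{1/2}R^{-1}$. Averaging over $k$ gives \eqref{eq:tail-upper-bound}.

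The main care point is the measure-theoretic justification that conditional Jensen and the tower property apply to $s^\star$ and $s_F^\star$ evaluated at the random point $(U_{\tau_k}^X,X)$. This holds because both are versions of conditional expectations given $(U_{\tau_k}^X,X)$ under the joint law built from $\bar\mu_M$, $P_\Delta$ and an independent $\xi$. The Gaussian tail integration is routine, but the constants must be tracked so that the exponent $\tfrac14$ is attained uniformly in $d$.
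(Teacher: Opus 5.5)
Your argument is correct. The tail term is handled exactly as in the paper: a union bound over $\{\|X\|>R\}\cup\{\|U_{\tau_k}^X\|>R\}$, Markov's inequality, and Cauchy--Schwarz with $\mathbb E\|s^\star\|^4\le 3\rho_{\min}^{-4}d^2$. The only difference there is that the paper uses independence of $\xi$ and $Y$ to get $\mathbb E\|U_{\tau_k}^X\|^2=\alpha_{\tau_k}^2M_2+\rho_{\tau_k}^2d\le M_2+d$ instead of your $2M_2+2d$, which only changes the constant.

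For the clipping term, your route differs in its last step. The paper bounds the residual by $\rho_{\tau_k}^{-1}\|\xi\|\mathbf 1_{\{\|\xi\|>F\rho_{\tau_k}\}}$. It derives the same tail bound $\PP(\|\xi\|>r)\le\exp(-\tfrac12\max\{r-\sqrt d,0\}^2)$ from an explicit Chernoff argument with the $\chi^2_d$ moment generating function, rather than from Lipschitz Gaussian concentration. It then applies Cauchy--Schwarz with $\mathbb E\|\xi\|^4\le 3d^2$, and that step is exactly where the factor $d$ and the exponent $\tfrac14$ come from. You instead keep the exact residual $(\|\xi\|-F\rho_{\tau_k})_+/\rho_{\tau_k}$ and integrate the tail directly. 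When $F\rho_{\min}\ge\sqrt d$, this gives the sharper estimate $\mathcal C_{R,F}\le 2\rho_{\min}^{-2}\exp(-\tfrac12(F\rho_{\min}-\sqrt d)^2)$, which implies the stated bound since $d\ge1$.

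The paper's version is entirely elementary and self-contained. Yours invokes the Gaussian concentration inequality but gives better constants. For example, under the choice of $F_N$ in Corollary~\ref{cor:score-error-rate}, your bound yields an $N^{-2}$ clipping contribution rather than $N^{-1}$. For the stated lemma, either approach suffices.
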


We are now ready to establish the global score error bound. For convenience, let $\mathfrak S_{N,R,F}$ denote the quantity that summarizes  the preceding upper bounds, namely,
\[
\begin{aligned}
&\mathfrak S_{N,R,F}
:=
\tfrac{1}{\rho_{\min}^2}
\Big[
\inf_{s\in\mathcal F(L,\mathbf p,S,G,F)}
\mathcal E_{R,F}(s)
\\
&
+\Psi_{N,R,F}^{w}(s_{\hat\theta_N})+dF^2\frac{S_{\mathrm{tot}}(L+1)}{N}
\Lambda_{N,R,F}
\Big]
\\
&+\rho_{\min}^{-2}d
\exp\left(
-\frac14\max\{F\rho_{\min}-\sqrt d,0\}^2
\right)
\\
&+\rho_{\min}^{-2}d
(M_{X,2}+M_2+d)^{1/2}R^{-1}
\\
&+dF^2(M_{X,2}+M_2+d)R^{-2}.
\end{aligned}
\]
\begin{theorem}
\label{thm:score-error-bound}
Assume Assumption~\ref{ass:sde-wellposed} holds. There exists a universal
constant \(C>0\) such that
\begin{equation}
\label{eq:score-error-bound}
\mathbb E_{\mathcal D_N}
\left[
\mathcal E(s_{\hat\theta_N})
\right]
\le
C\mathfrak S_{N,R,F}.
\end{equation}
\end{theorem}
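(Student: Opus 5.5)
The bound follows by chaining the three preceding results; no new estimate is needed beyond careful bookkeeping. We start from Lemma~\ref{lem:tail-clipping-reduction}, applied pointwise in the training sample to the random estimator $s=s_{\hat\theta_N}\in\mathcal F(L,\mathbf p,S,G,F)$:
\[
\mathcal E(s_{\hat\theta_N})\le 2\mathcal E_{R,F}(s_{\hat\theta_N})+2\,\mathcal C_{R,F}+\mathcal T_{R,F}.
\]
The point to stress is that $\mathcal C_{R,F}$ and $\mathcal T_{R,F}$ are deterministic: they depend only on the population quantities $s^\star$, $s_F^\star$, $\bar\mu_M$ and $q_{\tau_k}$, and not on $\mathcal D_N$. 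Taking $\mathbb E_{\mathcal D_N}$ on both sides, which is legitimate since all terms are nonnegative and measurable in $\mathcal D_N$, therefore gives
\[
\mathbb E_{\mathcal D_N}[\mathcal E(s_{\hat\theta_N})]\le 2\,\mathbb E_{\mathcal D_N}[\mathcal E_{R,F}(s_{\hat\theta_N})]+2\,\mathcal C_{R,F}+\mathcal T_{R,F}.
\]

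Next we bound each of the three terms by an earlier result.
\begin{itemize}
\item For the first term we invoke Proposition~\ref{prop:localized-score-oracle}. This yields exactly $C\rho_{\min}^{-2}[\,\inf_s\mathcal E_{R,F}(s)+\Psi^w_{N,R,F}(s_{\hat\theta_N})+dF^2S_{\mathrm{tot}}(L+1)\Lambda_{N,R,F}/N\,]$, which is the first bracket of $\mathfrak S_{N,R,F}$.
\item For $\mathcal C_{R,F}$ we use the first inequality of Lemma~\ref{lem:clipping-tail-control}. It gives the exponential term $\rho_{\min}^{-2}d\exp(-\tfrac14\max\{F\rho_{\min}-\sqrt d,0\}^2)$.
\item For $\mathcal T_{R,F}$ we use \eqref{eq:tail-upper-bound}. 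It gives the two remaining terms, $\rho_{\min}^{-2}d(M_{X,2}+M_2+d)^{1/2}R^{-1}$ and $dF^2(M_{X,2}+M_2+d)R^{-2}$. Both are finite by Lemma~\ref{lem:coefficient-increment-integrability}, which is where Assumption~\ref{ass:sde-wellposed} enters.
\end{itemize}

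Summing these bounds, and absorbing the factors $2$ together with the maximum of the three universal constants into a single universal $C$, gives $\mathbb E_{\mathcal D_N}[\mathcal E(s_{\hat\theta_N})]\le C\mathfrak S_{N,R,F}$.

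The main point needing care is in fact not analytic. The constants in Proposition~\ref{prop:localized-score-oracle} and Lemma~\ref{lem:clipping-tail-control} must be genuinely universal, i.e.\ independent of $N,R,F,d$ and the network architecture, so that they can be merged into one constant. One should also check that the hypotheses match. Proposition~\ref{prop:localized-score-oracle} requires $s_{\hat\theta_N}$ to lie in the class, which holds by construction. The reduction lemma is stated for every $s$ in the class, so applying it to the data-dependent $s_{\hat\theta_N}$ before taking expectations is valid.
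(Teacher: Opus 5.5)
Your proposal is correct and follows the paper's proof: apply Lemma~\ref{lem:tail-clipping-reduction} to $s_{\hat\theta_N}$, take $\mathbb E_{\mathcal D_N}$ using that $\mathcal C_{R,F}$ and $\mathcal T_{R,F}$ do not depend on the data, and then bound the three terms by Proposition~\ref{prop:localized-score-oracle} and Lemma~\ref{lem:clipping-tail-control}. Finally, absorb the numerical factors into one universal constant via the definition of $\mathfrak S_{N,R,F}$.
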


The following corollary specifies the choice of the clipping and localization parameters \(F\) and \(R\) so that the clipping and tail terms vanish as the number \(N\) of sample paths in the training set \(\mathcal D_N\) increases.

\begin{corollary}
\label{cor:score-error-rate}
Under the assumptions of Theorem~\ref{thm:score-error-bound},
suppose in addition that there exists a constant
$C_{\mathrm{width}}>0$, independent of $N$ and $d$, such that, 
\[
\max_{0\le\ell\le L+1}p_\ell
\le C_{\mathrm{width}}d.
\]
For $N\ge2$, set
$F_N:=\frac{\sqrt d+2\sqrt{\log N}}{\rho_{\min}}\vee 1, 
R_N:=N,$
and let
$s_{\hat\theta_N}\in\mathcal F(L,\mathbf p,S,G,F_N)$
be the score estimator.
Then there exists a constant $C>0$, depending only on
$\rho_{\min}$, $L$, $G$, and $C_{\mathrm{width}}$, such that 
\begin{align}
&\mathbb E_{\mathcal D_N}
\left[\mathcal E(s_{\hat\theta_N})\right]
\le C\Big[
\inf_{s\in\mathcal F(L,\mathbf p,S,G,F_N)}
\mathcal E_{R_N,F_N}(s)\nonumber\\
&\quad +\Psi_{N,R_N,F_N}^{w}(s_{\hat\theta_N})
+\frac{d(d+\log N)}{N}\Big\{
\nonumber\\
&\quad S_{\mathrm{tot}}(L+1)\log(N+d) +\sqrt{M_{X,2}+M_2+d}
\Big\}\Big].\nonumber
\end{align}
\end{corollary}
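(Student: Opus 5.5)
The plan is to start from Theorem~\ref{thm:score-error-bound} with $F=F_N$ and $R=R_N=N$, and bound each term of $\mathfrak S_{N,R_N,F_N}$ separately. The constant $C$ may absorb $\rho_{\min}^{-2}$, $L$, $G$ and $C_{\mathrm{width}}$. The first two terms, the approximation infimum and $\Psi^{w}_{N,R_N,F_N}$, carry over unchanged up to the factor $\rho_{\min}^{-2}$. The work is therefore in the statistical term, the clipping term and the two tail terms.

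\emph{Statistical term.} Since $F_N\le(\sqrt d+2\sqrt{\log N})/\rho_{\min}+1$, we have $F_N^2\le C(d+\log N)$, so $dF_N^2\le C\,d(d+\log N)$. For the logarithm $\Lambda_{N,R_N,F_N}$, I will use $\mathsf p_{\max}\le C_{\mathrm{width}}d$, $R_N=N$ and $F_N\le C\sqrt{d+\log N}$. The argument of the log is then a polynomial in $(N,d)$ of bounded degree, with coefficients depending only on $L,G,C_{\mathrm{width}},\rho_{\min}$. Hence $\Lambda_{N,R_N,F_N}\le C\log(N+d)$ for $N\ge2$. This gives the contribution $C\,\frac{d(d+\log N)}{N}S_{\mathrm{tot}}(L+1)\log(N+d)$.

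\emph{Clipping term.} With this choice of $F_N$ we have $F_N\rho_{\min}-\sqrt d\ge2\sqrt{\log N}$, so $\exp(-\tfrac14\max\{\cdot,0\}^2)\le \exp(-\log N)=N^{-1}$. The term is therefore at most $\rho_{\min}^{-2}d/N$, which is dominated by the displayed $\frac{d(d+\log N)}{N}\sqrt{M_{X,2}+M_2+d}$ because $\sqrt{M_{X,2}+M_2+d}\ge\sqrt d\ge1$.

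\emph{Tail terms.} The term $\rho_{\min}^{-2}d(M_{X,2}+M_2+d)^{1/2}R_N^{-1}$ equals $C\,\frac dN\sqrt{M_{X,2}+M_2+d}$, which is within the bound. The term $dF_N^2(M_{X,2}+M_2+d)R_N^{-2}$ is the main obstacle. It carries the full moment sum $M_{X,2}+M_2+d$ rather than its square root, and it only has the prefactor $d(d+\log N)/N^2$. I would write $(M_{X,2}+M_2+d)/N=\sqrt{M_{X,2}+M_2+d}\cdot\sqrt{M_{X,2}+M_2+d}/N$ and then argue in two regimes. If $M_{X,2}+M_2+d\le N^2$, the term is at most $\frac{d(d+\log N)}{N}\sqrt{M_{X,2}+M_2+d}$. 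In the opposite regime the bound is trivial anyway: the corollary's right-hand side already exceeds $d(d+\log N)$, while $\mathcal E(s_{\hat\theta_N})$ can be bounded directly, using $\|s\|\le\sqrt dF_N$ and the integrability of $s^\star$ (via the second-moment estimate behind Lemma~\ref{lem:clipping-tail-control}). If that second regime is awkward, the fallback is to absorb the fixed moment constants into $C$, treating them as independent of $N$.

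Summing the four contributions and collecting constants gives the stated inequality.
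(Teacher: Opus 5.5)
Your proposal is correct and follows the paper's proof essentially step by step. It uses the same bounds $F_N^2\le C(d+\log N)$ and $\Lambda_{N,R_N,F_N}\le C\log(N+d)$, the same $d/N$ clipping estimate, and the same two-regime split at $\sqrt{M_{X,2}+M_2+d}\le N$, with the crude bound $\mathcal E(s_{\hat\theta_N})\le 2dF_N^2+2\rho_{\min}^{-2}d$ in the large-moment regime. Drop the stated fallback, though: the main argument does not need it, and absorbing $M_{X,2}$ and $M_2$ into $C$ would contradict the requirement that $C$ depend only on $\rho_{\min}$, $L$, $G$ and $C_{\mathrm{width}}$.
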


\subsection{KL Bounds for Generated Path Laws}\label{subsec:KL-gene-path}
We now translate the score estimation error established in Theorem \ref{thm:score-error-bound} into a one-step KL divergence bound for the transition kernel and derive the corresponding path-level KL bound. 

\begin{theorem}
\label{thm:path-kl-bound}
Under Assumptions~\ref{ass:sde-wellposed}
and~\ref{ass:increment-density-regularity}, there exists a constant
$C>0$, depending only on the fixed parameters $a_0$, $b_0$,
and $\mathcal T$, such that, conditional on the training sample $\mathcal{D}_N$, the following holds 
\begin{align}
\label{eq:one-step-kl-bound}
&\mathbb E_{X\sim\bar\mu_M}\left[
\mathrm{KL}\left(
P_\Delta(X,\cdot)
\middle\|
\widehat Q_{N,\Delta}(\cdot\mid X)
\right)\right]\\
&\le
C\left\{
\eta_{\mathcal T,h}\left[M_2+d(1+L_1)\right]
+(\mathcal T+1)^3\mathcal E(s_{\hat\theta_N})
\right\},\nonumber
\end{align}
where $\eta_{\mathcal{T},h}:=\exp\left(-\frac{a_0}{2}\mathcal{T}^2-b_0\mathcal{T} \right) + h(\mathcal{T}+1)^5$. 
Moreover, the KL risk between the true and generated path laws satisfies
\begin{equation}
\mathcal R_N
\le
CM\left\{
\eta_{\mathcal T,h}\left[M_2+d(1+L_1)\right]
+(\mathcal T+1)^3\mathfrak S_{N,R,F}
\right\}.\nonumber
\end{equation}
\end{theorem}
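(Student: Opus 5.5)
The proof has two parts: a one-step bound \eqref{eq:one-step-kl-bound}, obtained from a conditional version of the standard KL analysis of diffusion samplers, and a path-level bound obtained from the chain rule for KL divergence.

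\textbf{One-step bound.} Fix $\mathcal D_N$ and a point $x$ at which Assumption~\ref{ass:increment-density-regularity} holds. Apply \citet[Proposition~2]{gao2025generating} with the target law $P_\Delta(x,\cdot)$ and the score estimator $s_{\hat\theta_N}(\cdot,\cdot,x)$. That proposition splits $\mathrm{KL}(P_\Delta(x,\cdot)\|\widehat Q_{N,\Delta}(\cdot\mid x))$ into three contributions, which follow from Girsanov's theorem and the data-processing inequality on path space:
\begin{itemize}
\item an initialization error $\mathrm{KL}(q_{\mathcal T}(\cdot\mid x)\|\mathcal N(0,I_d))\lesssim \alpha_{\mathcal T}^2\,(\int\|y\|^2P_\Delta(x,\d y)+d)$, where $\alpha_{\mathcal T}^2=\exp(-\frac{a_0}{2}\mathcal T^2-b_0\mathcal T)$;
\item a discretization error of order $h(\mathcal T+1)^5\,(\int\|y\|^2P_\Delta(x,\d y)+d(1+L_\Delta(x)))$;
\item a score error of order $(\mathcal T+1)^3\frac1K\sum_k\EE\|s_{\hat\theta_N}-s^\star\|^2(\tau_k,U^x_{\tau_k},x)$.
\end{itemize}
In the score error, the prefactor $(\mathcal T+1)^3$ comes from $\sup_\tau g^2(\tau)\lesssim 1+\mathcal T$ times the conversion from the Riemann sum to an integral. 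The discretization error uses the Lipschitz constant $L_\Delta(x)$ of $s_\Delta^\star$ to control the Lipschitz constant and the time regularity of $s^\star(\tau,\cdot,x)$, with polynomial growth in $\beta(\tau)\lesssim 1+\mathcal T$.

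We then integrate over $x\sim\bar\mu_M$. The second moments integrate to $M_2$ (Lemma~\ref{lem:coefficient-increment-integrability}) and $L_\Delta$ integrates to $L_1$. This gives \eqref{eq:one-step-kl-bound}, with a constant depending only on $a_0,b_0,\mathcal T$ after absorbing bounded factors.

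\textbf{Path-level bound.} Both path laws have the same initial law $\mu_0$ and are Markov with kernels $\bar P_\Delta$ and $x\mapsto x+\widehat Q_{N,\Delta}(\cdot\mid x)$. By the chain rule for KL,
\[
\mathrm{KL}(P_{0:M}\|\widehat P^{\,N}_{0:M})=\sum_{m=0}^{M-1}\EE_{X\sim\mu_m}\mathrm{KL}\big(P_\Delta(X,\cdot)\big\|\widehat Q_{N,\Delta}(\cdot\mid X)\big).
\]
Here the translation by $x$ is a bijection and so preserves KL. The conditional law of $X_{t_{m+1}}$ given the past depends only on $X_{t_m}$ by the Markov property, and the expectation is taken under the true marginal $\mu_m$. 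This sum equals $M\,\EE_{X\sim\bar\mu_M}[\cdots]$, which is exactly why $\bar\mu_M$ is the weighting used in the definition of $\mathcal E$.

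Applying the one-step bound conditionally on $\mathcal D_N$, then taking $\EE_{\mathcal D_N}$ and using Theorem~\ref{thm:score-error-bound} ($\EE_{\mathcal D_N}\mathcal E(s_{\hat\theta_N})\le C\mathfrak S_{N,R,F}$), yields the bound on $\mathcal R_N$.

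\textbf{Main obstacle.} The hardest step is checking that the hypotheses of \citet[Proposition~2]{gao2025generating} hold pointwise in $x$ with constants that are integrable against $\bar\mu_M$, in particular that the dependence on $L_\Delta(x)$ and on the second moment is \emph{linear}, so that integration produces $M_2+d(1+L_1)$. If the proposition instead gives a nonlinear dependence such as $L_\Delta^2$, I would redo the discretization estimate with Tweedie-type bounds on $\nabla_u s^\star$ under the VP schedule. A second, lesser point is that the chain rule requires $P_\Delta(x,\cdot)\ll\widehat Q_{N,\Delta}(\cdot\mid x)$. This holds because $\widehat Q_{N,\Delta}(\cdot\mid x)$ has a positive Gaussian-convolved density, the last step of the Euler scheme adding nondegenerate Gaussian noise.
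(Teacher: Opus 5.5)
Your proposal is correct and follows the paper's proof. The paper also invokes \citet[Proposition~2]{gao2025generating}, but applies it directly with conditioning variable $X\sim\bar\mu_M$ rather than pointwise in $x$; since the cited bound is linear in $M_2$, $L_1$ and $\mathcal E(s_{\hat\theta_N})$, this is equivalent to your pointwise-then-integrate route, and the paper then absorbs the resulting terms into $\eta_{\mathcal T,h}[M_2+d(1+L_1)]$ using $\alpha_{\mathcal T}^2\le 1$ and $h\le\mathcal T$, exactly as you anticipate, before obtaining the path bound from the KL chain rule with translation invariance, $\sum_m\mu_m=M\bar\mu_M$, and Theorem~\ref{thm:score-error-bound}.
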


\section{Numerical Experiments}\label{sec:numerical}

In this section, we present two numerical experiments to evaluate  our method. Section~\ref{subsec:numerical-cvg} considers synthetic data and investigates the empirical convergence rate of the method.  Section \ref{subsec:real-data} presents an application of the proposed method to a real-world dataset to evaluate its forecasting performance. Detailed hyperparameter settings for both experiments are provided in the Supplementary Material.

\subsection{Empirical convergence rate }\label{subsec:numerical-cvg}

We consider the following \(d\)-dimensional SDE on \([0,1]\)
\begin{equation}
\d X_t=-X_{t-}\,\d t+\d W_t+\int_{(0,\infty)^d}\gamma(X_{t-},z)\,\Pi(\d t,\d z)\nonumber
\end{equation}
where $X_0\sim\mathcal{N}(0,I_d)$, $\gamma(x,z)=0.1x+\big( 1+\tfrac{1}{\sqrt{1+\lVert x\rVert^2}}\big)z,$  and
the Poisson random measure \(\Pi(\d t,\d z)\), independent of \((W_t)_{t\in[0,1]}\), has intensity
\(10\,e^{-(z_1+\cdots+z_d)}\,\d z\,\d t\) on \([0,1]\times(0,\infty)^d\).
The process is observed on the grid \(t_m=m\Delta\), \(m=0,\ldots,100\), with \(\Delta=0.01\).

\textbf{Main hyperparameter setting.} We use a score network with \(K=100\) noising steps and noise scale \(\beta(\tau)=19.9\tau+0.1\). The score network is trained for 256 epochs using the Adam optimizer with a learning rate of \(10^{-3}\) and a  batch size of \(256\). We select the network with the lowest denoising score-matching loss on the validation set. The network architecture $\mathbf p$ and sparsity ratio  \(s_{\mathrm{ratio}}\)  are selected from 
\begin{align}
    &\mathbf p\in\big\{(2d,16,32,16,d),\ (2d,16,32,32,16,d),\nonumber\\
   & \quad \quad \quad(2d,16,32,64,32,16,d)\big\},\nonumber\\
   &s_{\mathrm{ratio}}\in\{0.25,0.50,0.75\}.\nonumber
\end{align}
The sparsity level \(S\) is then obtained by multiplying \(s_{\mathrm{ratio}}\) by the total number of parameters in the linear layers. The complete hyperparameter settings are provided in the Supplementary Material.


\textbf{Experimental setting}
We consider dimensions \(d\in\{1,2\}\) and different training sample sizes $N\in\{100,200,500,1000,2000,5000\}.$
For each combination of $d$ and $N$, we run 20 independent trials.
In each trial, the model is trained on $N$ sample paths, and the validation and test sets each consist of 3000 independently simulated paths, both independent of the training set.

\textbf{Illustration of the generated paths.} 
Figure \ref{fig:our-method-vs-gao} compares the paths generated by our method with those generated by \cite{gao2025generating}.  Qualitatively, 
the paths generated by our method exhibit more pronounced  jumps and appear more similar to those in the test set than those generated by \cite{gao2025generating}.
\begin{figure}[ht]
    \centering
    \begin{subfigure}[t]{0.48\linewidth}
        \centering
        \includegraphics[width=\linewidth]{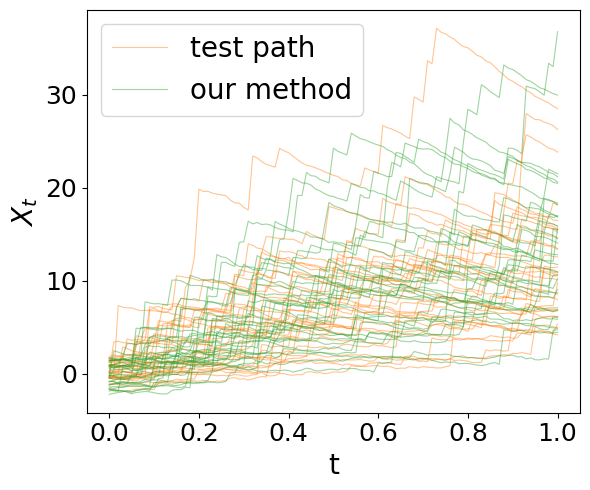}
    \end{subfigure}
    \hfill
    \begin{subfigure}[t]{0.48\linewidth}
        \centering
        \includegraphics[width=\linewidth]{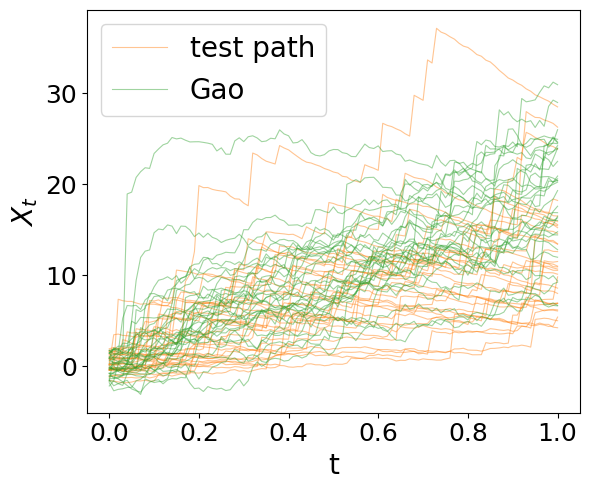}
    \end{subfigure}
\caption{Comparison of test sample paths with sample paths generated by our method (left) and by the method in \cite{gao2025generating} (right).}
\label{fig:our-method-vs-gao}
\end{figure}

\textbf{Empirical convergence rate.}
We next assess the empirical convergence rate of our method on the test set  and compare it with that of \cite{gao2025generating}. Recall that Section \ref{sec:theoretical-part} uses the KL divergence to measure the discrepancy between the true and generated path distributions. Directly estimating the KL divergence over the full path, viewed as a vector in $\mathbb R^{101d}$, is challenging due to its high dimensionality. We therefore use the following procedure based on the KL divergence estimator of \cite{MR2729888}.


For each replication, we draw \(100\) subsets of size \(20/d\) from the time indices \(\{1,\ldots,100\}\)  without replacement. For each subset, the states at the selected time points are  concatenated into a \(20\)-dimensional vector. The corresponding  KL divergence is then estimated using  \(3000\) test paths, the \(3000\) generated paths and the method proposed by \cite{MR2729888}. For each replication, we average the KL estimates over the \(100\) randomly selected subsets. The reported value is then the mean over the \(20\) independent replications, with a \(95\%\) Student-\(t\) confidence interval.

Figure \ref{fig:exp1-cvg-rate} plots the estimated error described above  against the training sample size \(N\) on a log-log scale, showing an empirical rate close to $N^{-1/2}$ up to logarithmic factors. In Theorem~\ref{thm:path-kl-bound}, the upper bound involves \(\mathfrak S_{N,R,F}\), whose convergence rate is studied in Corollary~\ref{cor:score-error-rate}. The corresponding bound contains approximation and optimization errors,  and a statistical estimation term of order $N^{-1}$ up to logarithmic factors. The empirical rate $N^{-1/2}$ observed here may therefore be due to the approximation or optimization errors dominating in the finite-sample regime. In addition, the numerical estimation of the KL divergence using the method of \cite{MR2729888} may introduce an additional estimation error.  



\begin{figure}[ht]
    \centering
    \begin{subfigure}[t]{0.48\linewidth}
        \centering
        \includegraphics[width=\linewidth]{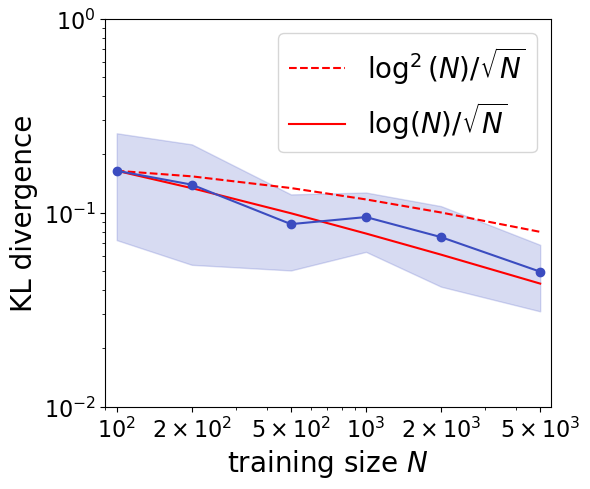}
    \end{subfigure}
    \hfill
    \begin{subfigure}[t]{0.48\linewidth}
        \centering
        \includegraphics[width=\linewidth]{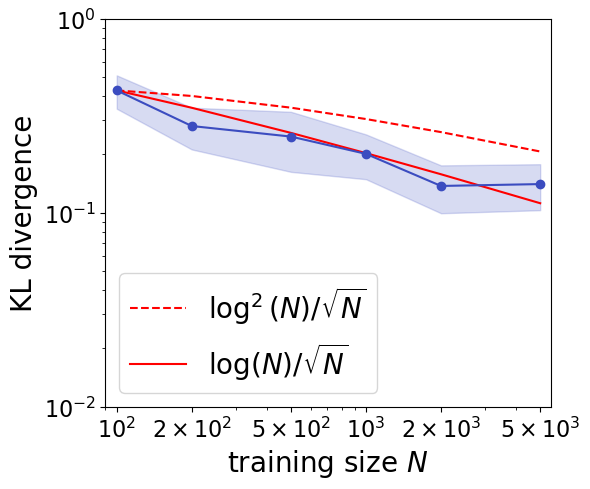}
    \end{subfigure}
\caption{Empirical convergence rates for \(d=1\) (left) and \(d=2\) (right).}
    \label{fig:exp1-cvg-rate}
\end{figure}

\begin{figure}[ht]
    \centering
    \begin{subfigure}[t]{0.48\linewidth}
        \centering
        \includegraphics[width=\linewidth]{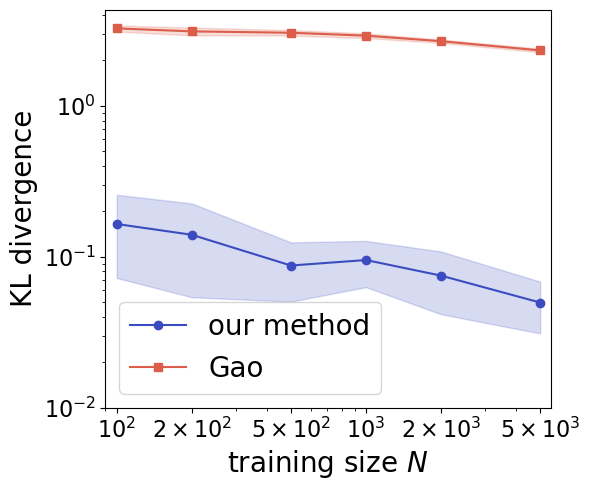}
    \end{subfigure}
    \hfill
    \begin{subfigure}[t]{0.48\linewidth}
        \centering
        \includegraphics[width=\linewidth]{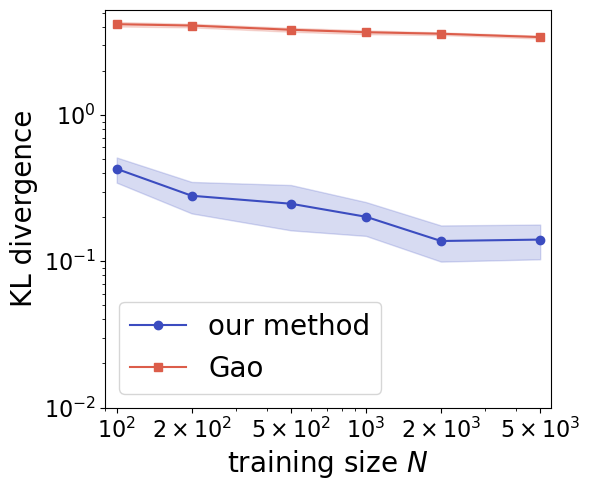}
    \end{subfigure}
\caption{Comparison of the empirical convergence rates for sample paths generated by our method and by \cite{gao2025generating} for \(d=1\) (left) and \(d=2\) (right).}
\label{fig:our-method-vs-gao-2}
\end{figure}

Figure \ref{fig:our-method-vs-gao-2} compares the empirical convergence rates of our method and that of \cite{gao2025generating} on a log-log scale, showing faster convergence of our method in both settings. Although the confidence intervals for the estimator of
\cite{gao2025generating} appear narrower in
Figure~\ref{fig:our-method-vs-gao-2}, this is due to the logarithmic
scale used in the figure and its larger mean KL estimates.
Its empirical variance across replications is in fact larger
than that of our method in every setting. Detailed values underlying the figure are provided in Table~\ref{tab:kl-our-vs-gao} in the Supplementary Material.





\subsection{Application to Forecasting on Real Data}\label{subsec:real-data}

In this section, we apply our method to real-world datasets from finance and insurance, and evaluate its forecasting performance.  We only report results for two datasets (\textbf{Crypto} and \textbf{Danish}) here. Additional results on other datasets are provided in the Supplementary Material.

\textbf{Crypto}: daily closing prices of BTC, ETH, SOL and XRP quoted in USDT, obtained from the Binance spot API, from 23~March 2021 to 26~October 2025. The four assets are modeled jointly ($d=4$).

\textbf{Danish}: a surplus path obtained from the \texttt{danishuni} fire-loss records in the R package \texttt{fitdistrplus} \citep{delignette2015fitdistrplus}. Losses are summed by day, with days without recorded losses assigned a value of zero.  The surplus process starts at zero with a premium rate set equal to the average daily loss.

\textbf{Training--validation--test split.} Each series is split chronologically into training (\(70\%\)), validation (\(20\%\)), and test (\(10\%\)) sets, used respectively to fit the model, select the best hyperparameters, and evaluate forecasting performance. 


\textbf{One-step-ahead forecasting procedure.} We evaluate one-step-ahead forecasts on the test set. Conditioning on each observed state $X_t$, we sample an increment \(\widehat{\Delta X}_t\) from the estimated increment kernel $\widehat Q_{N,\Delta}(\cdot\mid X_t)$ defined in \eqref{eq:generated-increment-law}, and predict the next state by $X_t+\widehat{\Delta X}_t$. The ``mean'' forecast shown in Figure~\ref{fig:real-flat} is $X_t+\frac{1}{20}\sum_{l=1}^{20}\widehat{\Delta X}_t^{(l)}$  based on 20 independent increment draws at each observed $X_t$. The blue band in Figure~\ref{fig:real-flat} represents the 95\% Student-$t$ confidence interval for the mean forecast $X_t+\overline{\Delta X}_t\pm t_{0.975,19}\,\frac{s_t}{\sqrt{20}}$,
where \(s_t\) is the sample standard deviation of the \(20\) draws. For the \textbf{Crypto} dataset, we report the results for BTC and ETH here, and those for SOL and XRP in the Supplementary Material.

\begin{figure}[ht]
    \centering
    \begin{subfigure}{\linewidth}
        \centering
        \includegraphics[width=0.82\linewidth]{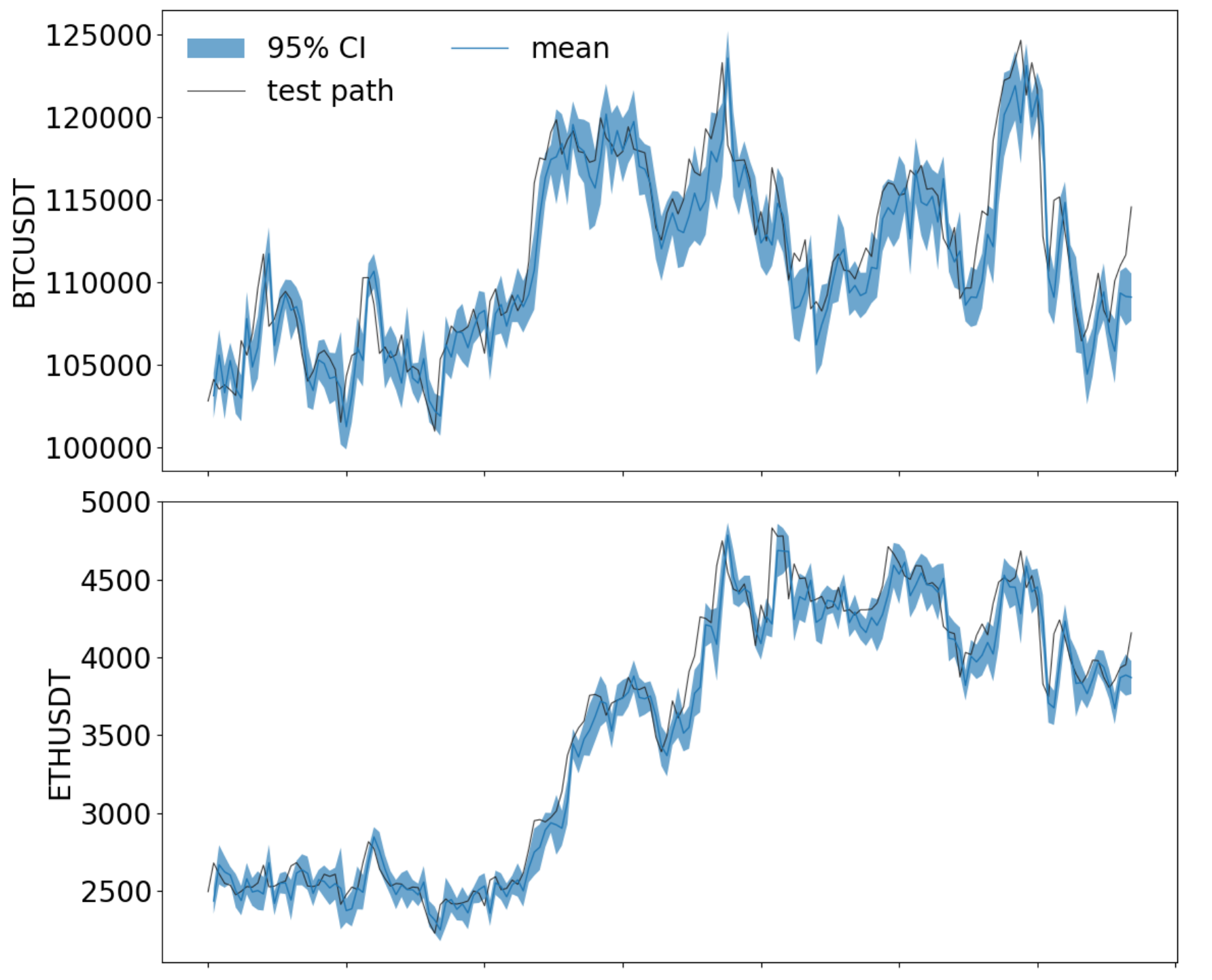}
    \end{subfigure}
    \begin{subfigure}{\linewidth}
        \centering
        \includegraphics[width=0.82\linewidth]{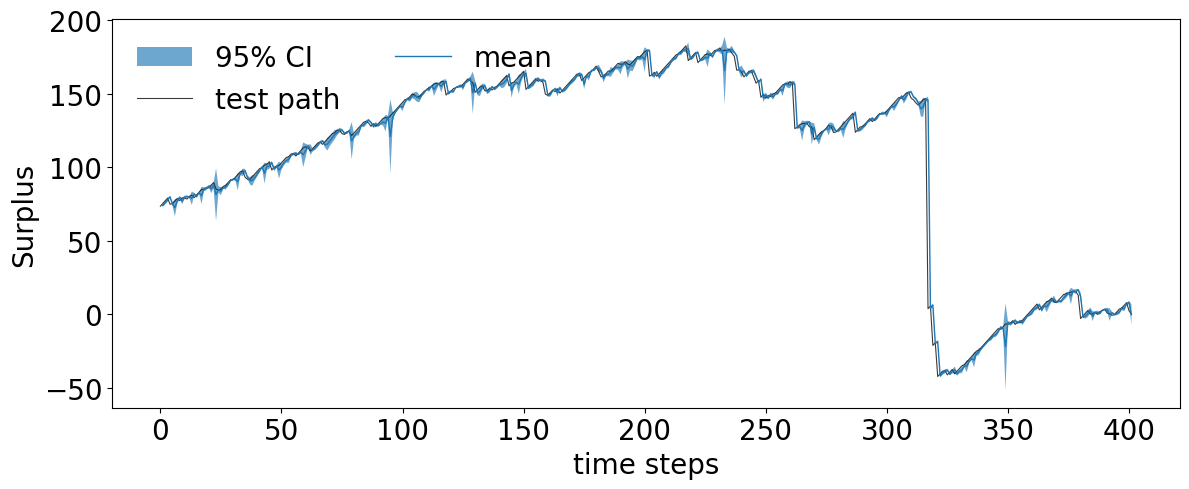}
    \end{subfigure}
    \caption{One-step forecasts of our estimator on the test segment for BTC and ETH in the \textbf{Crypto} dataset (top) and for the \textbf{Danish} dataset (bottom).}
    \label{fig:real-flat}
\end{figure}
\textbf{Benchmark method.} To compare our method with another generative approach, we use the Schrödinger bridge (SB) method of \cite{demarco2026} as a forecasting benchmark.  
Further implementation details are provided in the Supplementary Material.


\begin{figure}[ht]
    \centering
    \begin{subfigure}{\linewidth}
        \centering
        \includegraphics[width=0.7\linewidth]{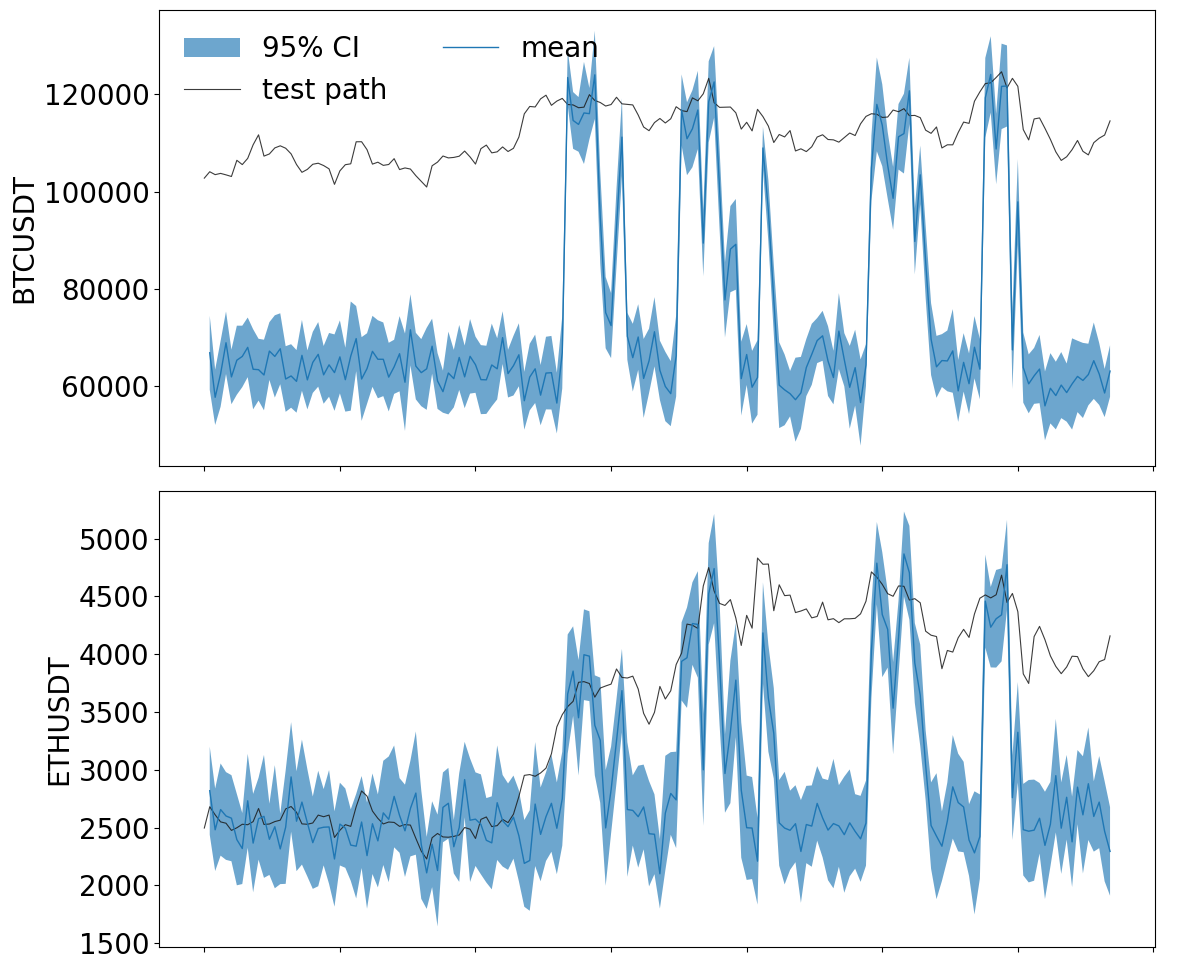}
    \end{subfigure}
    \begin{subfigure}{\linewidth}
        \centering
        \includegraphics[width=0.7\linewidth]{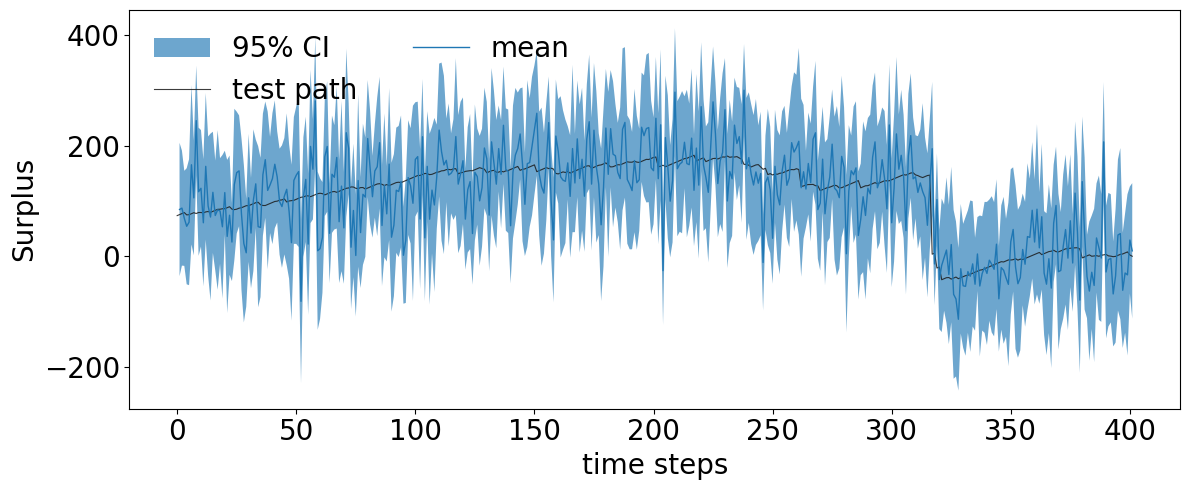}
    \end{subfigure}
    \caption{One-step forecasts of the time-homogeneous SBJTS estimator  on the test segment for BTC and ETH in the \textbf{Crypto} dataset (top) and for the \textbf{Danish} dataset (bottom).  The curves and the band are constructed as in Figure~\ref{fig:real-flat}.}
    \label{fig:real-sbjts}
\end{figure}

On the \textbf{Crypto} dataset, our forecasts stays within the
range of the observed test trajectory, whereas the SBJTS method forecasts lie below it. On the \textbf{Danish} dataset, our forecasts remain close to the observed path, while the SBJTS forecasts show no clear level shift but exhibit much larger fluctuations. The conditional
kernel regression estimators used in SBJTS method in \citep{demarco2026} rely on local training-data coverage around the conditioning states. Otherwise, the ratios defining
these estimators are undefined and require additional numerical
handling. 

\section{Conclusion}\label{sec:conclusion}

We study conditional diffusion models for learning the discrete-time transition kernel of the time-homogeneous jump-diffusion SDE \eqref{eq:sde}. We establish non-asymptotic bounds for the score error and the KL divergence between true and generated discrete path laws. Numerical experiments show improved performance over \cite{gao2025generating} and illustrate the effectiveness of our method for one-step forecasting on real-world data. 

\textbf{Limitation.} Our analysis assumes observations on a fixed time grid with step size $\Delta>0$ and establishing bounds uniformly over $\Delta$ remains challenging. We also assume that the initial distribution $\mu_0$ is known. When $\mu_0$ is unknown, using a smoothed estimator $\widehat\mu_0$ would instead add the term $\mathrm{KL}(\mu_0|\widehat\mu_0)$ to the path-level bound in Theorem \ref{thm:path-kl-bound}. Despite these limitations, the numerical results show promising performance for both transition-kernel learning and one-step-ahead forecasting task, and extending our approach to multi-step forecasting is a promising direction for future research.

\subsection*{Acknowledgment}
Yuzhen Zhao and Quentin Guibert acknowledge support from the Research Chair DIALog under the aegis of the Risk Foundation, a joint initiative by Universit\'e Paris-Dauphine PSL and CNP Assurances. The funding body had no role in the design of the study, the analysis, the writing of the manuscript, or the decision to submit the article for publication. Yating Liu acknowledges financial support from the CNRS through the MITI interdisciplinary programs. Yating Liu and Yuzhen Zhao acknowledge Marc Hoffmann for fruitful discussions on this work.


\bibliographystyle{plainnat}
\bibliography{main-bib}

\clearpage
\appendix
\thispagestyle{empty}

\onecolumn
\aistatstitle{Supplementary Materials for Learning Transition Kernels of Jump-Diffusion Processes with Conditional Diffusion Models}

\section{Detailed Proofs}\label{sec:proofs}

\subsection{Proof of Section \ref{subsec:assumption}}

\begin{proof}[Proof of Lemma~\ref{lem:coefficient-increment-integrability}]
    It follows from \citet[Corollary 6.2.4]{MR2512800} that $\sup_{0\le t\le T}\mathbb E\|X_t\|^2<\infty.$
Then, by the definition of $\bar\mu_M$,
\[
M_{X,2}
=
\frac1M\sum_{m=0}^{M-1}\mathbb E\|X_{t_m}\|^2
\le
\sup_{0\le t\le T}\mathbb E\|X_t\|^2
<\infty.
\]

Moreover, the definition of $P_{\Delta}(x, \cdot)$ in \eqref{eq:transition-kernel-bis} and the
definition of $\bar\mu_M$ give
\begin{align*}
M_2
&=
\frac1M\sum_{m=0}^{M-1}
\mathbb E\|X_{t_{m+1}}-X_{t_m}\|^2
\le
\frac2M\sum_{m=0}^{M-1}
\left(
    \mathbb E\|X_{t_{m+1}}\|^2
    +\mathbb E\|X_{t_m}\|^2
\right)
\le
4\sup_{0\le t\le T}\mathbb E\|X_t\|^2
<\infty. \hfill\qedhere
\end{align*}
\end{proof}

\subsection{Proof of Section \ref{subsec:local-clip}}




\begin{proof}[Proof of Lemma~\ref{lem:tail-clipping-reduction}]
    Fix $s\in\mathcal F(L,\mathbf p,S,G,F)$ and
$k\in\{1,\ldots,K\}$, and set
\[
A_k:=\{(\tau_k,U_{\tau_k}^X,X)\in\mathsf D_R\}.
\]
In the following inequalities, all score functions are evaluated
at $(\tau_k,U_{\tau_k}^X,X)$.
On $A_k$,
\[
\|s-s^\star\|^2
\le2\|s-s_F^\star\|^2+2\|s_F^\star-s^\star\|^2.
\]
On $A_k^c$, the network constraint gives $\|s\|\le\sqrt d\,F$,
and hence
\[
\|s-s^\star\|^2\le2dF^2+2\|s^\star\|^2.
\]
Taking expectations with $X\sim\bar\mu_M$ and
$U_{\tau_k}^X\mid X\sim q_{\tau_k}(\cdot\mid X)$,
and averaging over $k$, proves
\eqref{eq:tail-clipping-reduction}.
\end{proof}

\subsection{Proofs of Section \ref{subsec:global-score-error}}

\subsubsection{Proof of Proposition~\ref{prop:localized-score-oracle}}

The proof of Proposition~\ref{prop:localized-score-oracle} depends on several auxiliary lemmas.
Throughout this subsection, we write $\mathcal F:=\mathcal F(L,\mathbf p,S,G,F).$ Let $\mathcal Z^\circ$ be an independent copy of
$\mathcal Z^{(1)}$, and independent of the training data $\mathcal D_{N}$.
For any measurable function
$g:(\mathbb R^d\times\mathbb R^d)^M\to\mathbb R$
such that $\mathbb E\bigl[|g(\mathcal Z^\circ)|\bigr]<\infty$, define
\[
\mathsf P g
:=
\mathbb E[g(\mathcal Z^\circ)],
\qquad
\mathsf P_N g
:=
\frac1N\sum_{n=1}^N g(\mathcal Z^{(n)}).
\]
Recall that by conditional Jensen's inequality, the conditional
expectation in \eqref{eq:clipped-population-score}
satisfies, 
\[
\|s_F^\star(\tau_k,u,x)\|\le F,
\qquad
k=1,\ldots,K,\quad u,x\in\mathbb R^d.
\]


\begin{lemma}
\label{lem:localized-loss-properties}
For every $s\in\mathcal F(L,\mathbf p,S,G,F)$, we have
\begin{equation}
\label{eq:localized-clipped-dsm-identity}
\begin{aligned}
\mathsf P \mathcal{G}_s^w
&=
\frac1K\sum_{k=1}^K\rho_{\tau_k}^2\,
\mathbb E\bigg[
\left\|
s(\tau_k,U_{\tau_k}^X,X)
-s_F^\star(\tau_k,U_{\tau_k}^X,X)
\right\|^2
\times
\mathbf 1_{\{(\tau_k,U_{\tau_k}^X,X)\in\mathsf D_R\}}
\bigg],
\end{aligned}
\end{equation}
where $X\sim\bar\mu_M$ and
$U_{\tau_k}^X\mid X\sim q_{\tau_k}(\cdot\mid X)$.
Consequently,
\begin{equation}
\label{eq:unweighted-weighted-localized-comparison}
\rho_{\min}^2\mathcal E_{R,F}(s)
\le \mathsf P \mathcal{G}_s^w
\le \mathcal E_{R,F}(s).
\end{equation}
Moreover, we have 
\begin{align}
\label{eq:clipped-envelope}
\|\mathcal{G}_s^w\|_\infty
\le 4dF^2,\quad 
\mathsf P\big[(\mathcal{G}_s^w)^2\big]
\le 8dF^2\,\mathsf P \mathcal{G}_s^w,
\end{align}
where $\|\mathcal{G}_s^w\|_\infty = \sup_{z\in(\mathbb R^d\times\mathbb R^d)^M}|\mathcal{G}_s^w(z)|$. 
\end{lemma}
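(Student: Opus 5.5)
The plan is to prove the three claims in turn: the identity, then the comparison, then the envelope bounds. Everything rests on the standard bias--variance decomposition of a squared loss around a conditional mean, applied at each fixed $(m,k)$.

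\textbf{Step 1: the identity.} Under $\mathsf P$, the pair $(X_{t_m},Y_m)$ has law $\mu_m\otimes P_\Delta$. Averaging over $m$ therefore produces $X\sim\bar\mu_M$ and $Y\mid X\sim P_\Delta(X,\cdot)$, with $\xi$ independent of both. Set $U=\alpha_{\tau_k}Y+\rho_{\tau_k}\xi$; then $U\mid X\sim q_{\tau_k}(\cdot\mid X)$. The indicator $\mathbf 1_{\{(\tau_k,U,X)\in\mathsf D_R\}}$ is a function of $(U,X)$ only. Write $a=s(\tau_k,U,X)$, $b=s_F^\star(\tau_k,U,X)$ and $T=\mathsf T_{F,k}(\xi)$. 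Then
\[
\|a-T\|^2-\|b-T\|^2=\|a-b\|^2+2\langle a-b,\,b-T\rangle .
\]
By definition \eqref{eq:clipped-population-score}, $b=\mathbb E[T\mid U,X]$. Since $a-b$ and the indicator are $(U,X)$-measurable, the cross term has zero expectation after conditioning on $(U,X)$. Integrability is automatic because everything is bounded: $\|a\|\le\sqrt d F$, $\|b\|\le F$ and $\|T\|\le F$. Multiplying by $\rho_{\tau_k}^2$ and averaging over $k$ gives \eqref{eq:localized-clipped-dsm-identity}.

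\textbf{Step 2: the comparison.} The two-sided inequality \eqref{eq:unweighted-weighted-localized-comparison} follows at once from $\rho_{\min}^2\le\rho_{\tau_k}^2\le 1$, using $\rho_\tau^2=1-\alpha_\tau^2\le1$.

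\textbf{Step 3: the sup-norm envelope.} For each summand, $\big|\|a-T\|^2-\|b-T\|^2\big|=|\langle a-b,\,a+b-2T\rangle|$. This is at most $\|a-b\|\,\|a+b-2T\|$, with $\|a-b\|\le(\sqrt d+1)F$ and $\|a+b-2T\|\le(\sqrt d+3)F$. Multiplied by $\rho^2\le1$, this gives a constant times $dF^2$. Getting exactly $4dF^2$ may need sharper bookkeeping, for instance the coordinatewise bound $\|a\|_\infty\le F$ combined with $\|b\|,\|T\|\le F\le\sqrt dF$. This seems to give roughly $(2\sqrt d F)^2$-type constants, and I would tune it there. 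Averaging over $m,k$ and $\xi$ preserves the bound.

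\textbf{Step 4: the variance bound (main obstacle).} The difficulty is that the pointwise difference is not nonnegative, so we cannot simply use $\mathsf P[g^2]\le\|g\|_\infty\mathsf P g$. Instead, apply Jensen to the average over $(m,k,\xi)$:
\[
(\mathcal G_s^w(z))^2\le\frac1{MK}\sum_{m,k}\rho_{\tau_k}^4\,\mathbb E_\xi\big[\langle a-b,\,a+b-2T\rangle^2\mathbf 1\big].
\]
Then use $\langle a-b,\,a+b-2T\rangle^2\le\|a-b\|^2\|a+b-2T\|^2$, and bound $\|a+b-2T\|^2$ by a constant times $dF^2$. Taking $\mathsf P$ makes the remaining factor $\mathbb E[\rho^2\|a-b\|^2\mathbf 1]$, which is exactly $\mathsf P\mathcal G_s^w$ by Step 1, using $\rho^4\le\rho^2$. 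The constant $8$ comes from the same bookkeeping as in Step 3.
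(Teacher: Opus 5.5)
Your Steps 1 and 2 are the paper's argument: the same decomposition around $b=\mathbb E[T\mid U,X]$, and the same use of $\rho_{\min}^2\le\rho_{\tau_k}^2\le 1$. Your Jensen step over $(m,k,\xi)$ in Step 4 is also what the paper does. The gap is that the lemma asserts the constants $4dF^2$ and $8dF^2$, and your Steps 3 and 4 do not deliver them. In Step 4 this cannot be fixed by bookkeeping.

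Write $D:=\|a-T\|^2-\|b-T\|^2$. Your route bounds $D^2\le\|a-b\|^2\,\|a+b-2T\|^2$ pointwise, then bounds $\|a+b-2T\|^2$ uniformly. That gives $(\sqrt d+3)^2F^2\le 16dF^2$. It cannot be improved to $8dF^2$ for $d\le 2$. In $d=1$, the admissible values $a=F$, $b=F-\epsilon$, $T=-F$ give $D^2/(a-b)^2=(4F-\epsilon)^2$, so no pointwise inequality $D^2\le c\,\|a-b\|^2$ holds with $c<16F^2$.

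The missing idea is to condition on $(U,X)$ before bounding, as you already did in Step 1. With $r:=b-T$ one has $D=\|a-b\|^2+2\langle a-b,r\rangle$, so
\[
\mathbb E[D^2\mid U,X]=\|a-b\|^4+4\,\mathbb E\big[\langle a-b,r\rangle^2\mid U,X\big]\le\|a-b\|^4+4F^2\|a-b\|^2 .
\]
This holds for two reasons. The cross term $4\|a-b\|^2\,\mathbb E[\langle a-b,r\rangle\mid U,X]$ vanishes. And $\mathbb E[\|r\|^2\mid U,X]=\mathbb E[\|T\|^2\mid U,X]-\|b\|^2\le F^2$. Since $\|a-b\|\le(\sqrt d+1)F\le 2\sqrt d\,F$, the right-hand side is at most $(4d+4)F^2\|a-b\|^2\le 8dF^2\|a-b\|^2$. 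Only the conditional variance of $T$ enters, not its worst-case value. The rest of your Step 4 then goes through unchanged.

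For Step 3, do not expand the difference. Both $\|a-T\|^2\le(\sqrt d+1)^2F^2\le 4dF^2$ and $\|b-T\|^2\le 4F^2\le 4dF^2$ are nonnegative, so $|D|\le 4dF^2$. Your product bound $(\sqrt d+1)(\sqrt d+3)F^2$ exceeds $4dF^2$ for $d\le 3$.

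Your weaker constants would still satisfy $A\le B$ in Lemma~\ref{lem:localized-relative-deviation}. So Proposition~\ref{prop:localized-score-oracle} would survive with a larger universal constant, but the lemma as stated needs the conditional argument.
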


\begin{proof}[Proof of Lemma \ref{lem:localized-loss-properties}]
Fix $s\in\mathcal F(L,\mathbf p,S,G,F)$.
Let $X\sim\bar\mu_M$, $Y\mid X\sim P_\Delta(X,\cdot)$,
and let $\xi\sim\mathcal N(0,I_d)$ be independent of $(X,Y)$.
For each fixed $k\in\{1,\ldots,K\}$, use the representation
\[
U_{\tau_k}^X=\alpha_{\tau_k}Y+\rho_{\tau_k}\xi,
\]
and define
\[
A_k:=\{(\tau_k,U_{\tau_k}^X,X)\in\mathsf D_R\},
\qquad
T_k:=
\left(1\wedge\frac{F\rho_{\tau_k}}{\|\xi\|}\right)
\left(-\frac{\xi}{\rho_{\tau_k}}\right),
\]
with $T_k:=0$ when $\xi=0$.
Then $\|T_k\|\le F$ and
\[
s_F^\star(\tau_k,U_{\tau_k}^X,X)
=\mathbb E[T_k\mid U_{\tau_k}^X,X].
\]
Set
\begin{align*}
a_k:=s(\tau_k,U_{\tau_k}^X,X)
      -s_F^\star(\tau_k,U_{\tau_k}^X,X),\quad 
r_k:=s_F^\star(\tau_k,U_{\tau_k}^X,X)-T_k,
\end{align*}
and
\begin{align*}
D_k
&:=\|s(\tau_k,U_{\tau_k}^X,X)-T_k\|^2
   -\|s_F^\star(\tau_k,U_{\tau_k}^X,X)-T_k\|^2
=\|a_k\|^2+2\langle a_k,r_k\rangle.
\end{align*}
Since $\mathbb E[r_k\mid U_{\tau_k}^X,X]=0$,
\[
\mathbb E[D_k\mid U_{\tau_k}^X,X]=\|a_k\|^2.
\]
Since $A_k$ is measurable with respect to $(U_{\tau_k}^X,X)$,
the definition of $\mathcal{G}_s^w$ and
$\bar\mu_M=M^{-1}\sum_{m=0}^{M-1}\mu_m$ give
\[
\mathsf P \mathcal{G}_s^w
=\frac1K\sum_{k=1}^K\rho_{\tau_k}^2
  \mathbb E[D_k\mathbf 1_{A_k}]
=\frac1K\sum_{k=1}^K\rho_{\tau_k}^2
  \mathbb E[\|a_k\|^2\mathbf 1_{A_k}].
\]
This proves \eqref{eq:localized-clipped-dsm-identity}.
The comparison \eqref{eq:unweighted-weighted-localized-comparison}
follows from $\rho_{\min}^2\le\rho_{\tau_k}^2\le1$.

The network constraint gives $\|s(\tau_k,u,x)\|\le\sqrt d\,F$.
Together with $\|s_F^\star(\tau_k,u,x)\|\le F$ and
$\|T_k\|\le F$, this implies that both squared norms defining
$D_k$ lie in $[0,4dF^2]$. Hence $|D_k|\le4dF^2$.
The same pointwise bound holds for every input pair $(x,y)$. Taking expectation with respect to the Gaussian noise
and averaging over $m$ and $k$ therefore yields the first inequality in 
\eqref{eq:clipped-envelope}.

For the second-moment bound, observe that
\begin{align*}
\mathbb E[\|r_k\|^2\mid U_{\tau_k}^X,X]
&=\mathbb E[\|T_k\|^2\mid U_{\tau_k}^X,X]
  -\|s_F^\star(\tau_k,U_{\tau_k}^X,X)\|^2
\le F^2.
\end{align*}
Since $a_k$ is measurable with respect to $(U_{\tau_k}^X,X)$ and
$\|a_k\|\le(\sqrt d+1)F\le2\sqrt d\,F$,
\begin{align*}
\mathbb E[D_k^2\mid U_{\tau_k}^X,X]
&=\|a_k\|^4
  +4\mathbb E[\langle a_k,r_k\rangle^2\mid U_{\tau_k}^X,X]\le\|a_k\|^4+4F^2\|a_k\|^2
\le8dF^2\|a_k\|^2.
\end{align*}
By Jensen's inequality and the definition of $\bar\mu_M$,
we obtain
\begin{align*}
\mathsf P\big[(\mathcal{G}_s^w)^2\big]
&\le\frac1K\sum_{k=1}^K\rho_{\tau_k}^4
    \mathbb E[D_k^2\mathbf 1_{A_k}]
\le8dF^2\frac1K\sum_{k=1}^K\rho_{\tau_k}^2
    \mathbb E[\|a_k\|^2\mathbf 1_{A_k}]
=8dF^2\,\mathsf P \mathcal{G}_s^w,
\end{align*}
where we used $\rho_{\tau_k}^4\le\rho_{\tau_k}^2$, which concludes the proof.
\end{proof}

\begin{lemma}
\label{lem:block-loss-entropy}
There exists a universal constant $C>0$ such that, for every
$0<\epsilon\le1$,
\begin{equation}
\label{eq:block-loss-entropy}
\begin{aligned}
&\log\mathcal N\!\left(
\epsilon,
\{\mathcal{G}_s^w:s\in\mathcal F(L,\mathbf p,S,G,F)\},
\|\cdot\|_\infty
\right)
\\
&\quad\le
C S_{\mathrm{tot}}(L+1)
\log\!\left(
\frac{
C d\left[
F^2+F(1+R)(1\vee G)(L+1)
\left(1+\max_{0\le\ell\le L+1}p_\ell\right)
\right]
}{\epsilon}
\right).
\end{aligned}
\end{equation}
Here $\mathcal N(\epsilon,\mathcal H,\|\cdot\|_\infty)$
denotes the minimum cardinality of an $\epsilon$-net of
$\mathcal H$, with centres in $\mathcal H$, under the supremum norm
on $(\mathbb R^d\times\mathbb R^d)^M$.
\end{lemma}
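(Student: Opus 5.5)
The plan is to reduce the covering number of the loss class $\{\mathcal G_s^w\}$ to a covering number of the network class $\mathcal F$ in the sup norm over the bounded region where the indicator is active, and then to invoke a standard parameter-discretization bound for sparse ReLU networks in the style of Schmidt-Hieber. The starting observation is that only inputs with $\|U_{m,k}\|\le R$ and $\|x_m\|\le R$ contribute to $\mathcal G_s^w$. Fix $s,\tilde s\in\mathcal F$ and write the difference of squared losses as
\[
\|s-\mathsf T_{F,k}\|^2-\|\tilde s-\mathsf T_{F,k}\|^2=\langle s-\tilde s,\, s+\tilde s-2\mathsf T_{F,k}\rangle .
\]
Since $\|s\|,\|\tilde s\|\le\sqrt d F$ and $\|\mathsf T_{F,k}\|\le F$, the second factor has norm at most $4\sqrt d F$. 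The $s_F^\star$ term cancels, and $\rho_{\tau_k}^2\le 1$. Hence
\[
\|\mathcal G_s^w-\mathcal G_{\tilde s}^w\|_\infty\le 4\sqrt d F\,\max_k\sup_{\|u\|,\|x\|\le R}\|s(\tau_k,u,x)-\tilde s(\tau_k,u,x)\|\le 4dF\,\max_k\sup_{\|u\|,\|x\|\le R}\|s-\tilde s\|_\infty .
\]
It therefore suffices to produce a $\delta$-net of $\mathcal F$ in the sup norm on $\{\tau_1,\dots,\tau_K\}\times[-R,R]^{2d}$, with $\delta=\epsilon/(4dF)$. A net built with arbitrary centres can be converted into one with centres in the class at the cost of a factor $2$ in $\epsilon$.

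Next comes a Lipschitz-in-parameters estimate for the architecture \eqref{eq:score-network}. Two networks share a sparsity pattern, and every parameter ($W_\ell$, $v_\ell$, $\gamma_{\ell,k}$) differs by at most $\eta$. I would propagate layer by layer: the hidden activations satisfy
\[
\|h_\ell\|_\infty\le \bigl((1\vee G)(\mathsf p_{\max}+1)\bigr)^{\ell}(1+R),
\]
and since ReLU is $1$-Lipschitz a telescoping argument gives
\[
\|s_\theta-s_{\theta'}\|_\infty\le \eta\,(L+1)(1+R)\bigl((1\vee G)(1+\mathsf p_{\max})\bigr)^{L+1}.
\]
The diagonal $\Gamma_{\ell,k}$ enter as entrywise multipliers bounded by $G$, so they only contribute factors of $(1\vee G)$.

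For the counting step, the nonzero entries of $(W_\ell,v_\ell)$ are at most $S$ and can be placed in at most $\binom{\sum_\ell p_\ell p_{\ell+1}+p_{\ell+1}}{S}$ ways. That count is bounded by $\exp(S(L+1)\log(C\mathsf p_{\max}))$ or so. The $\Gamma$ parameters number $K\sum_\ell p_{\ell+1}$ and are dense. I would discretize all $S_{\mathrm{tot}}$ active parameters on a grid of mesh $\eta$ in $[-(1\vee G),1\vee G]$. This gives
\[
\log\mathcal N\le S_{\mathrm{tot}}\log\bigl(2(1\vee G)/\eta\bigr)+S(L+1)\log(C\mathsf p_{\max}).
\]
Choosing $\eta$ so that $4dF$ times the Lipschitz bound equals $\epsilon$ turns the logarithm into
\[
\log\Bigl(\tfrac{CdF(1+R)(L+1)}{\epsilon}\Bigr)+(L+1)\log\bigl((1\vee G)(1+\mathsf p_{\max})\bigr).
\]
Absorbing this into $C S_{\mathrm{tot}}(L+1)\log(\cdot)$ gives the stated form; the $F^2$ term inside the log is harmless slack, since $F\ge1$ and $\epsilon\le1$.

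The main obstacle I expect is the bookkeeping in the Lipschitz propagation. One issue is the product over layers of $\|\Gamma\|\cdot\|W\|_{\text{op}}$; with the $\ell_\infty$ norm each factor is at most $(1\vee G)p_{\max}$, and the exponent $L+1$ must end up outside the logarithm as the $(L+1)$ prefactor. A second issue is that $\Gamma$ multiplies the bias as well as $W_\ell h_\ell$. I would handle both by writing each layer map as $z\mapsto \mathrm{ReLU}(\Gamma(Wz+v))$ and bounding its perturbation in $(W,v,\Gamma)$ at a fixed input, together with its Lipschitz constant in $z$, separately.
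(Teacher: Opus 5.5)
Your proposal is correct and follows essentially the same route as the paper. Both arguments use the $4dF$ Lipschitz reduction from the loss class to the network class on $\mathsf D_R$. Both then run the layerwise recurrence with $B=(1\vee G)(1+\mathsf p_{\max})$, which gives a parameter-Lipschitz constant of order $(1+R)(L+1)B^{L+1}$. The final step in both is a grid count over at most $(D+1)^S$ sparsity supports with $S_{\mathrm{tot}}$ free coordinates. The only differences are cosmetic: the paper gets centres in the class directly by choosing one admissible parameter vector per grid cell instead of your factor-$2$ conversion, and it keeps a harmless extra factor $2$ in the perturbation bound.
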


\begin{proof}[Proof of Lemma \ref{lem:block-loss-entropy}]
To simplify notation, we write $\mathcal G
:=
\{\mathcal{G}_s^w:s\in\mathcal F(L,\mathbf p,S,G,F)\}$.
For a function $f:\mathcal X\to\mathbb R^d$, define
\[
\|f\|_{\infty,R,K}
:=
\max_{1\le k\le K}
\sup_{(u,x)\in\mathcal U_R\times\mathcal X_R}
\|f(\tau_k,u,x)\|_\infty.
\]
For $s,\widetilde s\in\mathcal F(L,\mathbf p,S,G,F)$ and
any $t\in\mathbb R^d$ with $\|t\|\le F$,
\begin{align*}
\big|\|s(\tau_k,u,x)-t\|^2
      -\|\widetilde s(\tau_k,u,x)-t\|^2\big|
&\le
\|s(\tau_k,u,x)-\widetilde s(\tau_k,u,x)\|
\times
\big(\|s(\tau_k,u,x)\|
     +\|\widetilde s(\tau_k,u,x)\|+2\|t\|\big)
\\
&\le
4dF\,
\|s(\tau_k,u,x)-\widetilde s(\tau_k,u,x)\|_\infty.
\end{align*}
The terms involving $s_F^\star$ cancel in
$\mathcal{G}_s^w-\mathcal{G}_{\widetilde s}^w$.
Since $\rho_{\tau_k}^2\le1$ and the loss is restricted to
$\mathsf D_R$, it follows that
\begin{equation}
\label{eq:block-loss-lipschitz}
\|\mathcal{G}_s^w-\mathcal{G}_{\widetilde s}^w\|_\infty
\le4dF\,\|s-\widetilde s\|_{\infty,R,K}.
\end{equation}

Set
\[
p_*:=\max_{0\le\ell\le L+1}p_\ell,
\qquad G_*:=1\vee G,
\qquad B:=G_*(1+p_*).
\]
Let $\theta$ denote the network parameters and
$h_{\ell,\theta}^{(k)}$ the corresponding hidden-layer outputs.
The parameter bounds and the $1$-Lipschitz property of ReLU imply
\[
\max_{1\le k\le K}
\sup_{(u,x)\in\mathcal U_R\times\mathcal X_R}
\|h_{\ell,\theta}^{(k)}(u,x)\|_\infty
\le(1+R)B^\ell,
\qquad \ell=0,\ldots,L,
\]
uniformly over admissible $\theta$.
For two admissible parameter vectors $\theta,\widetilde\theta$,
put $r:=\|\theta-\widetilde\theta\|_\infty$ and
\[
e_\ell:=
\max_{1\le k\le K}
\sup_{(u,x)\in\mathcal U_R\times\mathcal X_R}
\|h_{\ell,\theta}^{(k)}(u,x)
 -h_{\ell,\widetilde\theta}^{(k)}(u,x)\|_\infty.
\]
Then $e_0=0$ and
\[
e_{\ell+1}
\le B e_\ell+2(1+R)B^{\ell+1}r,
\qquad \ell=0,\ldots,L-1.
\]
The same bound applies to the output layer.
Iterating the recurrence gives
\begin{equation}
\label{eq:score-parameter-perturbation}
\|s_\theta-s_{\widetilde\theta}\|_{\infty,R,K}
\le
2(1+R)(L+1)B^{L+1}
\|\theta-\widetilde\theta\|_\infty.
\end{equation}

There are
\[
D:=\sum_{\ell=0}^{L}p_{\ell+1}(p_\ell+1)
\le(L+1)p_*(p_*+1)
\]
possible shared weight and bias positions.
The number of supports of size at most $S$ is at most $(D+1)^S$.
For any fixed support, there are at most $S_{\mathrm{tot}}$
free coordinates, all lying in $[-G_*,G_*]$.
For each free coordinate, partition $[-G_*,G_*]$
into intervals of length at most $\delta$, where
\[
\delta:=
\frac{\epsilon}
{8dF(1+R)(L+1)B^{L+1}}.
\]
For each fixed support, choose one admissible parameter vector
from each cell containing such a vector.
The selected vectors form a $\delta$-net, since
$\|\theta-\widetilde\theta\|_\infty\le\delta$
whenever $\theta$ and $\widetilde\theta$ lie in the same cell.
The number of selected vectors is at most
\[
\left(1+\frac{2G_*}{\delta}\right)^{S_{\mathrm{tot}}}.
\]
By \eqref{eq:block-loss-lipschitz} and
\eqref{eq:score-parameter-perturbation}, the selected
parameters yield an $\epsilon$-net of $\mathcal G$.
Taking the union of these nets over all supports, we obtain
\begin{align*}
\log\mathcal N(\epsilon,\mathcal G,\|\cdot\|_\infty)
&\le S\log(D+1)
 +S_{\mathrm{tot}}\log\!\left(1+\frac{2G_*}{\delta}\right)
\\
&=S\log(D+1)
 +S_{\mathrm{tot}}\log\!\left(
 1+\frac{16dF(1+R)(L+1)G_*B^{L+1}}{\epsilon}
 \right).
\end{align*}
Since $S\le S_{\mathrm{tot}}$,
$D\le(L+1)p_*(p_*+1)$,
$B=G_*(1+p_*)$, $F\ge1$, and $0<\epsilon\le1$,
the preceding bound implies
\eqref{eq:block-loss-entropy}
for a sufficiently large universal constant $C>0$.
\end{proof}

\begin{lemma}
\label{lem:localized-relative-deviation}
Let $\mathcal G$ be a nonempty class of measurable real-valued
functions on $(\mathbb R^d\times\mathbb R^d)^M$.
Suppose that there exist constants $1\le A\le B$, $V\ge1$,
and $\mathfrak C\ge e$ such that, for every $g\in\mathcal G$,
\[
\mathsf P g\ge0,
\qquad
\|g\|_\infty\le A,
\qquad
\mathsf P(g^2)\le B\,\mathsf P g,
\]
and
\[
\log\mathcal N(\epsilon,\mathcal G,\|\cdot\|_\infty)
\le V\log\left(\frac{\mathfrak C}{\epsilon}\right),
\qquad 0<\epsilon\le1.
\]
Then there exists a universal constant $C>0$ such that
\begin{equation}
\label{eq:generic-localized-relative-deviation}
\mathbb E\left[
\sup_{g\in\mathcal G}
\left\{\frac12\mathsf P g-\mathsf P_N g\right\}
\right]
\le
C B\,\frac{V\log(\mathfrak C N)}{N}.
\end{equation}
\end{lemma}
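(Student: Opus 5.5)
The plan is a standard fast-rate argument: discretize $\mathcal G$ with a finite $\epsilon$-net, apply Bernstein's inequality together with the variance condition $\mathsf P(g^2)\le B\,\mathsf P g$ to each net element, take a union bound, and integrate the tail. Throughout, $\mathcal Z^{(1)},\ldots,\mathcal Z^{(N)}$ are i.i.d. copies of $\mathcal Z^\circ$, so $\mathsf P_N g$ is an average of i.i.d. bounded variables with mean $\mathsf P g$.

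\textbf{Step 1 (discretization).} Fix $\epsilon=1/N$ and let $\{g_1,\ldots,g_J\}\subset\mathcal G$ be an $\epsilon$-net in $\|\cdot\|_\infty$, with $\log J\le V\log(\mathfrak C N)$. For any $g\in\mathcal G$ choose $g_j$ with $\|g-g_j\|_\infty\le\epsilon$. Then
\[
\tfrac12\mathsf P g-\mathsf P_N g\le \tfrac12\mathsf P g_j-\mathsf P_N g_j+\tfrac32\epsilon .
\]
The error $\tfrac32\epsilon=\tfrac{3}{2N}$ is at most $C B V\log(\mathfrak C N)/N$, because $B,V\ge1$ and $\mathfrak C\ge e$. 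It therefore suffices to bound $\mathbb E\max_j\{\tfrac12\mathsf P g_j-\mathsf P_N g_j\}$.

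\textbf{Step 2 (Bernstein for each net element).} Fix $g_j$ and write $\mu_j=\mathsf P g_j\ge0$. The summands $g_j(\mathcal Z^{(n)})-\mu_j$ are centred, bounded by $2A$, and have variance at most $B\mu_j$. Bernstein's inequality gives
\[
\PP\bigl(\mu_j-\mathsf P_N g_j\ge \tfrac12\mu_j+t\bigr)
\le\exp\Bigl(-\frac{N(\mu_j/2+t)^2}{2B\mu_j+\tfrac43 A(\mu_j/2+t)}\Bigr).
\]
To simplify the exponent, use $(\mu_j/2+t)^2\ge 2t\mu_j\ge \tfrac{t}{2}\cdot 4\mu_j$ and $(\mu_j/2+t)^2\ge t(\mu_j/2+t)$, and bound the denominator by $2\max(2B\mu_j,\tfrac43A(\mu_j/2+t))$. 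Both cases give an exponent of at least $cNt/B$, using $A\le B$. This is the key self-normalization step: the $\tfrac12\mu_j$ slack absorbs the variance $B\mu_j$. It is the only delicate point, and I expect it to be the main obstacle, mostly bookkeeping of constants.

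\textbf{Step 3 (union bound and integration).} The union bound yields
\[
\PP\bigl(\max_j\{\tfrac12\mathsf P g_j-\mathsf P_N g_j\}\ge t\bigr)\le J e^{-cNt/B}.
\]
For a nonnegative variable $Z$ we have $\mathbb E Z=\int_0^\infty\PP(Z\ge t)\,dt$. Take $Z$ as the positive part of the maximum and split the integral at $t_0=B\log J/(cN)$. This gives
\[
\mathbb E\max_j\{\cdot\}\le t_0+\frac{B}{cN}\le C B\,\frac{V\log(\mathfrak C N)+1}{N}.
\]
Combining this with Step 1, and noting that $V\log(\mathfrak CN)\ge1$ absorbs the constant, proves \eqref{eq:generic-localized-relative-deviation}.
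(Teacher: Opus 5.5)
Your proposal is correct and follows the paper's proof essentially step for step: a $1/N$-net with the $\tfrac{3}{2N}$ discretization error, then Bernstein's inequality in which the $\tfrac12\mathsf P g_j$ slack absorbs the variance $B\,\mathsf P g_j$ (your case split gives $c=3/8$, the same constant as the paper), then a union bound and tail integration giving $\tfrac{B}{cN}(1+\log J)$. The only thing the paper adds is a short argument, via a sequence of $1/j$-nets, that the supremum over $\mathcal G$ is measurable so that its expectation is well defined; you should either include that step or interpret $\mathbb E$ as an outer expectation.
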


\begin{proof}[Proof of Lemma \ref{lem:localized-relative-deviation}]
For each integer $j\ge1$, choose a finite $1/j$-net
$\mathcal G_j\subseteq\mathcal G$ under $\|\cdot\|_\infty$.
For any $g,h\in\mathcal G$,
\[
\left|
\left(\frac12\mathsf P g-\mathsf P_N g\right)
-
\left(\frac12\mathsf P h-\mathsf P_N h\right)
\right|
\le \frac32\|g-h\|_\infty.
\]
Consequently,
\[
0\le
\sup_{g\in\mathcal G}
\left\{\frac12\mathsf P g-\mathsf P_N g\right\}
-
\max_{g\in\mathcal G_j}
\left\{\frac12\mathsf P g-\mathsf P_N g\right\}
\le \frac{3}{2j}.
\]
Since $\mathcal G_j$ is finite, the maximum over
$\mathcal G_j$ is measurable.
Letting $j\to\infty$, we conclude that the supremum over
$\mathcal G$ is measurable.

Set $\eta:=1/N$ and choose an $\eta$-net
$\{g_1,\ldots,g_J\}\subseteq\mathcal G$ satisfying
\[
\log J\le V\log(\mathfrak C N).
\]
For every $g\in\mathcal G$, choose $g_j$ with
$\|g-g_j\|_\infty\le\eta$. Then
\[
\frac12\mathsf P g-\mathsf P_N g
\le
\frac12\mathsf P g_j-\mathsf P_N g_j
+\frac{3}{2N},
\]
and hence
\begin{equation}
\label{eq:relative-deviation-net-reduction}
\sup_{g\in\mathcal G}
\left\{\frac12\mathsf P g-\mathsf P_N g\right\}
\le
\max_{1\le j\le J}
\left\{\frac12\mathsf P g_j-\mathsf P_N g_j\right\}
+\frac{3}{2N}.
\end{equation}

Fix $j\in\{1,\ldots,J\}$.
The random variables
$\mathsf P g_j-g_j(\mathcal Z^{(n)})$, $n=1,\ldots,N$,
are independent and centred, and satisfy
\[
\left|\mathsf P g_j-g_j(\mathcal Z^{(n)})\right|
\le 2A
\]
and
\[
\operatorname{Var}\!\left(
\mathsf P g_j-g_j(\mathcal Z^{(n)})
\right)
\le \mathsf P(g_j^2)
\le B\,\mathsf P g_j.
\]
Bernstein's inequality therefore gives, for $t>0$,
\begin{align*}
\mathbb P\left(
\frac12\mathsf P g_j-\mathsf P_N g_j\ge t
\right)
&\le
\exp\left(
-\frac{N(\frac12\mathsf P g_j+t)^2}
{2B\mathsf P g_j+\frac{4A}{3}
(\frac12\mathsf P g_j+t)}
\right)
\le
\exp\left(-\frac{3Nt}{8B}\right).
\end{align*}
For the last inequality, we used $A\le B$, $\mathsf P g_j\ge0$,
and
\[
\left(\frac12\mathsf P g_j+t\right)^2
\ge t(\mathsf P g_j+t),
\]
together with
\[
2B\mathsf P g_j+\frac{4A}{3}
\left(\frac12\mathsf P g_j+t\right)
\le\frac{8B}{3}(\mathsf P g_j+t).
\]
The union bound yields
\[
\mathbb P\left(
\max_{1\le j\le J}
\left\{\frac12\mathsf P g_j-\mathsf P_N g_j\right\}
\ge t
\right)
\le
\min\left\{1,J\exp\left(-\frac{3Nt}{8B}\right)\right\}.
\]
Integrating this tail bound over $t>0$ gives
\begin{align*}
\mathbb E\left[
\max_{1\le j\le J}
\left\{\frac12\mathsf P g_j-\mathsf P_N g_j\right\}
\right]
&\le
\int_0^\infty
\min\left\{1,J e^{-3Nt/(8B)}\right\}\,dt
=\frac{8B}{3N}(1+\log J).
\end{align*}
Combining this estimate with
\eqref{eq:relative-deviation-net-reduction}, we obtain
\[
\mathbb E\left[
\sup_{g\in\mathcal G}
\left\{\frac12\mathsf P g-\mathsf P_N g\right\}
\right]
\le
\frac{8B}{3N}(1+\log J)+\frac{3}{2N}.
\]
Since $\log J\le V\log(\mathfrak C N)$,
$B\ge1$, $V\ge1$, and $\log(\mathfrak C N)\ge1$,
this proves \eqref{eq:generic-localized-relative-deviation}
for a universal constant $C>0$.
\end{proof}

\begin{proof}[Proof of Proposition~\ref{prop:localized-score-oracle}]
Let
\[
\mathcal G
:=
\{\mathcal{G}_s^w:s\in\mathcal F(L,\mathbf p,S,G,F)\}.
\]
By Lemmas~\ref{lem:localized-loss-properties}
and~\ref{lem:block-loss-entropy},
Lemma~\ref{lem:localized-relative-deviation} applies to
$\mathcal G$ with $A=4dF^2$, $B=8dF^2$, and the entropy
bound in \eqref{eq:block-loss-entropy}. Consequently,
\begin{equation}
\label{eq:score-class-relative-deviation}
\begin{aligned}
&\mathbb E_{\mathcal D_N}\left[
\sup_{g\in\mathcal G}
\left\{\frac12\mathsf P g-\mathsf P_N g\right\}
\right]
\le
C dF^2\frac{S_{\mathrm{tot}}(L+1)}{N}
\log\!\left(
C N d\left[
F^2+F(1+R)(1\vee G)(L+1)
\left(1+\max_{0\le\ell\le L+1}p_\ell\right)
\right]
\right)
\end{aligned}
\end{equation}
for a universal constant $C>0$.

Since $s_{\hat\theta_N}\in\mathcal F(L,\mathbf p,S,G,F)$,
for every realization of the training data,
\[
\mathsf P \mathcal{G}_{s_{\hat\theta_N}}^w
\le
2\sup_{g\in\mathcal G}
\left\{\frac12\mathsf P g-\mathsf P_N g\right\}
+2\mathsf P_N \mathcal{G}_{s_{\hat\theta_N}}^w.
\]
Fix a deterministic $s\in\mathcal F(L,\mathbf p,S,G,F)$.
By \eqref{eq:expected-localized-optimization-error},
\begin{align*}
\mathbb E_{\mathcal D_N}
\left[\mathsf P_N \mathcal{G}_{s_{\hat\theta_N}}^w\right]
&=
\Psi_{N,R,F}^w(s_{\hat\theta_N})
+\mathbb E_{\mathcal D_N}\left[
\inf_{\widetilde s\in\mathcal F(L,\mathbf p,S,G,F)}
\mathsf P_N \mathcal{G}_{\widetilde s}^w
\right]
\\
&\le
\Psi_{N,R,F}^w(s_{\hat\theta_N})
+\mathbb E_{\mathcal D_N}\left[\mathsf P_N \mathcal{G}_s^w\right]
=
\Psi_{N,R,F}^w(s_{\hat\theta_N})+\mathsf P \mathcal{G}_s^w.
\end{align*}
Taking expectations yields
\begin{align*}
\mathbb E_{\mathcal D_N}
\left[\mathsf P \mathcal{G}_{s_{\hat\theta_N}}^w\right]
&\le
2\mathbb E_{\mathcal D_N}\left[
\sup_{g\in\mathcal G}
\left\{\frac12\mathsf P g-\mathsf P_N g\right\}
\right]
+2\mathsf P \mathcal{G}_s^w
+2\Psi_{N,R,F}^w(s_{\hat\theta_N}).
\end{align*}
Applying \eqref{eq:unweighted-weighted-localized-comparison}
and taking the infimum over deterministic
$s\in\mathcal F(L,\mathbf p,S,G,F)$, we obtain
\begin{align*}
\mathbb E_{\mathcal D_N}
\left[\mathcal E_{R,F}(s_{\hat\theta_N})\right]
&\le
\frac{2}{\rho_{\min}^2}\bigg[
\inf_{s\in\mathcal F(L,\mathbf p,S,G,F)}
\mathcal E_{R,F}(s)
+\Psi_{N,R,F}^w(s_{\hat\theta_N})
+\mathbb E_{\mathcal D_N}\left[
\sup_{g\in\mathcal G}
\left\{\frac12\mathsf P g-\mathsf P_N g\right\}
\right]
\bigg].
\end{align*}
Substituting \eqref{eq:score-class-relative-deviation}
yields the desired bound.
\end{proof}


\subsubsection{Proofs of Lemma~\ref{lem:clipping-tail-control}, Theorem~\ref{thm:score-error-bound}
and Corollary~\ref{cor:score-error-rate}}

\begin{proof}[Proof of Lemma~\ref{lem:clipping-tail-control}]
Let $X\sim\bar\mu_M$, $Y\mid X\sim P_\Delta(X,\cdot)$,
and let $\xi\sim\mathcal N(0,I_d)$ be independent of $(X,Y)$.
By Lemma~\ref{lem:coefficient-increment-integrability},
\[
\mathbb E\|X\|^2=M_{X,2}<\infty,
\qquad
\mathbb E\|Y\|^2=M_2<\infty.
\]
For each fixed $k\in\{1,\ldots,K\}$, use the representation
\[
U_{\tau_k}^X=\alpha_{\tau_k}Y+\rho_{\tau_k}\xi,
\]
and set
\[
A_k:=\{(\tau_k,U_{\tau_k}^X,X)\in\mathsf D_R\},
\qquad
T_k:=
\left(1\wedge\frac{F\rho_{\tau_k}}{\|\xi\|}\right)
\left(-\frac{\xi}{\rho_{\tau_k}}\right),
\]
with $T_k:=0$ when $\xi=0$.
By \eqref{eq:conditional-score-identity}
and \eqref{eq:clipped-population-score},
\[
s^\star(\tau_k,U_{\tau_k}^X,X)
=\mathbb E\left[-\frac{\xi}{\rho_{\tau_k}}
\,\middle|\,U_{\tau_k}^X,X\right],
\qquad
s_F^\star(\tau_k,U_{\tau_k}^X,X)
=\mathbb E[T_k\mid U_{\tau_k}^X,X].
\]
Since $A_k$ is measurable with respect to $(U_{\tau_k}^X,X)$,
conditional Jensen's inequality gives
\begin{align*}
&\mathbb E\left[
\|s_F^\star(\tau_k,U_{\tau_k}^X,X)
 -s^\star(\tau_k,U_{\tau_k}^X,X)\|^2\mathbf 1_{A_k}
\right]
\le
\mathbb E\left[
\left\|T_k+\frac{\xi}{\rho_{\tau_k}}\right\|^2
\mathbf 1_{A_k}
\right]
\le
\rho_{\tau_k}^{-2}\mathbb E\left[
\|\xi\|^2\mathbf 1_{\{\|\xi\|>F\rho_{\tau_k}\}}
\right].
\end{align*}
Averaging over $k$ and using $\rho_{\tau_k}\ge\rho_{\min}$,
we obtain
\begin{equation}
\label{eq:clipping-reduction-gaussian}
\mathcal C_{R,F}
\le
\rho_{\min}^{-2}\mathbb E\left[
\|\xi\|^2\mathbf 1_{\{\|\xi\|>F\rho_{\min}\}}
\right].
\end{equation}

For $0<\lambda<1/2$,
\[
\mathbb E e^{\lambda\|\xi\|^2}=(1-2\lambda)^{-d/2}.
\]
For $r>\sqrt d$, exponential Markov's inequality with
$\lambda=(1-d/r^2)/2$ yields
\[
\mathbb P(\|\xi\|>r)
\le
\exp\left(-\frac{r^2-d}{2}+d\log\frac{r}{\sqrt d}\right)
\le
\exp\left(-\frac12(r-\sqrt d)^2\right),
\]
where the last inequality follows from $\log x\le x-1$.
Using $\mathbb P(\|\xi\|\ge r)\le1$ for
$0\le r\le\sqrt d$, we conclude that
\[
\mathbb P(\|\xi\|>r)
\le
\exp\left(-\frac12\max\{r-\sqrt d,0\}^2\right),
\qquad r\ge0.
\]
Since $\mathbb E\|\xi\|^4=d(d+2)\le3d^2$,
the Cauchy--Schwarz inequality gives
\[
\mathbb E\left[\|\xi\|^2\mathbf 1_{\{\|\xi\|>r\}}\right]
\le
\sqrt{3}\,d\,
\exp\left(-\frac14\max\{r-\sqrt d,0\}^2\right).
\]
Substituting $r=F\rho_{\min}$ into
\eqref{eq:clipping-reduction-gaussian} gives
\[
\mathcal C_{R,F}
\le
\sqrt{3}\,\rho_{\min}^{-2}d\,
\exp\left(-\frac14\max\{F\rho_{\min}-\sqrt d,0\}^2\right).
\]

We next bound $\mathcal T_{R,F}$.
Independence of $\xi$ and $(X,Y)$ gives
\[
\mathbb E\|U_{\tau_k}^X\|^2
=\alpha_{\tau_k}^2M_2+\rho_{\tau_k}^2d
\le M_2+d.
\]
Since $\tau_k\in[\tau_1,\mathcal T]$,
\[
A_k^c
=\{\|X\|>R\}\cup\{\|U_{\tau_k}^X\|>R\}.
\]
Markov's inequality therefore yields
\begin{equation}
\label{eq:outside-domain-probability}
\mathbb P(A_k^c)
\le
\frac{\mathbb E\|X\|^2+\mathbb E\|U_{\tau_k}^X\|^2}{R^2}
\le
(M_{X,2}+M_2+d)R^{-2}.
\end{equation}
Moreover, conditional Jensen's inequality gives
\[
\mathbb E\|s^\star(\tau_k,U_{\tau_k}^X,X)\|^4
\le
\rho_{\tau_k}^{-4}\mathbb E\|\xi\|^4
\le3\rho_{\min}^{-4}d^2.
\]
Combining this estimate with
\eqref{eq:outside-domain-probability} and the
Cauchy--Schwarz inequality, we obtain
\begin{align*}
&\mathbb E\left[
\|s^\star(\tau_k,U_{\tau_k}^X,X)\|^2\mathbf 1_{A_k^c}
\right]
\le
\left(\mathbb E\|s^\star(\tau_k,U_{\tau_k}^X,X)\|^4\right)^{1/2}
\mathbb P(A_k^c)^{1/2}
\le
\sqrt{3}\,\rho_{\min}^{-2}d
(M_{X,2}+M_2+d)^{1/2}R^{-1}.
\end{align*}
By \eqref{eq:tail-term}, averaging over $k$ yields
\begin{align*}
\mathcal T_{R,F}
&\le
2dF^2(M_{X,2}+M_2+d)R^{-2}
+
2\sqrt{3}\,\rho_{\min}^{-2}d
(M_{X,2}+M_2+d)^{1/2}R^{-1}.
\end{align*}
This proves \eqref{eq:tail-upper-bound}.
\end{proof}


\begin{proof}[Proof of Theorem~\ref{thm:score-error-bound}]
Applying Lemma~\ref{lem:tail-clipping-reduction} to
$s_{\hat\theta_N}$ and taking expectations gives
\[
\mathbb E_{\mathcal D_N}\left[\mathcal E(s_{\hat\theta_N})\right]
\le
2\mathbb E_{\mathcal D_N}
\left[\mathcal E_{R,F}(s_{\hat\theta_N})\right]
+2\mathcal C_{R,F}+\mathcal T_{R,F}.
\]
Proposition~\ref{prop:localized-score-oracle},
Lemma~\ref{lem:clipping-tail-control}, and the definition of
$\mathfrak S_{N,R,F}$ now imply
\eqref{eq:score-error-bound}.
\end{proof}

\begin{proof}[Proof of Corollary~\ref{cor:score-error-rate}]
Throughout this proof, $C>0$ denotes a constant depending only on
$\rho_{\min}$, $L$, $G$, and $C_{\mathrm{width}}$ and
its value may change from line to line.
The definition of $F_N$ gives
\[
F_N\rho_{\min}-\sqrt d=2\sqrt{\log N},
\qquad
F_N^2\le C(d+\log N).
\]

Then
\[
\rho_{\min}^{-2}d
\exp\left(
-\frac14\max\{F_N\rho_{\min}-\sqrt d,0\}^2
\right)
\le C\frac{d}{N}.
\]
Since $R_N=N$,
\[
dF_N^2(M_{X,2}+M_2+d)R_N^{-2}
\le
C\frac{d(d+\log N)}{N^2}(M_{X,2}+M_2+d).
\]
and 
\begin{align*}
\rho_{\min}^{-2}d(M_{X,2}+M_2+d)^{1/2}R_N^{-1}
&\le C\frac dN(M_{X,2}+M_2+d)^{1/2}.
\end{align*}
If $\sqrt{M_{X,2}+M_2+d}\le N$, then, since $(M_{X,2}+M_2+d)\ge d\ge1$, the clipping and tail terms together are
bounded by
\[
C\frac dN\left[\sqrt{M_{X,2}+M_2+d}
+\frac{(d+\log N)(M_{X,2}+M_2+d)}{N}
\right]
\le C\frac{d(d+\log N)\sqrt {M_{X,2}+M_2+d}}{N}.
\]
If instead $\sqrt{M_{X,2}+M_2+d}>N$,
the conditional score identity and conditional Jensen's
inequality give
\[
\mathbb E\|s^\star(\tau_k,U_{\tau_k}^X,X)\|^2
\le \rho_{\tau_k}^{-2}d
\le \rho_{\min}^{-2}d,
\qquad k=1,\ldots,K,
\]
where $X\sim\bar\mu_M$ and
$U_{\tau_k}^X\mid X\sim q_{\tau_k}(\cdot\mid X)$.
Together with the network constraint
$\|s_{\hat\theta_N}(\tau_k,u,x)\|^2\le dF_N^2$, this yields 
\[
\mathbb E_{\mathcal D_N}
\left[\mathcal E(s_{\hat\theta_N})\right]
\le 2dF_N^2+2\rho_{\min}^{-2}d
\le C d(d+\log N)
\le C\frac{d(d+\log N)\sqrt {M_{X,2}+M_2+d}}{N}. 
\]
Using
$\max_{0\le\ell\le L+1}p_\ell\le C_{\mathrm{width}}d$,
together with $R_N=N$ and $F_N^2\le C(d+\log N)$,
we obtain
\begin{align*}
&CNd\left[
F_N^2+F_N(1+R_N)(1\vee G)(L+1)
\left(1+\max_{0\le\ell\le L+1}p_\ell\right)
\right]\le C(N+d)^{9/2}.
\end{align*}
Since $N+d\ge3$, it follows that
\begin{align*}
&\log\!\left(
CNd\left[
F_N^2+F_N(1+R_N)(1\vee G)(L+1)
\left(1+\max_{0\le\ell\le L+1}p_\ell\right)
\right]
\right)\le C\log(N+d).
\end{align*}
Therefore, the estimation term in $\mathfrak S_{N,R_N,F_N}$
is bounded by
\[
C\frac{
dS_{\mathrm{tot}}(L+1)(d+\log N)\log(N+d)
}{N}.
\]
Substituting these bounds into
Theorem~\ref{thm:score-error-bound}
proves the assertion.
\end{proof}

\subsection{Proof of Section \ref{subsec:KL-gene-path}}

Recall that the transition kernel $\bar P_{\Delta}(x,\cdot)$ defined by \eqref{eq:transition-kernel-def} and the increment kernel $P_{\Delta}(x,\cdot)$ defined by \eqref{eq:transition-kernel-bis} are related by
\[
\bar P_{\Delta}(x,B)=P_{\Delta}(x,B-x),
\qquad \text{for every } B\in\mathcal B(\mathbb R^d) \text{ and where } B-x:=\{y\in\mathbb R^d:x+y\in B\}.
\]
Similarly, using the estimated increment kernel $\widehat Q_{N,\Delta}(\cdot\mid x)$ defined in \eqref{eq:generated-increment-law}, we define the estimated transition kernel by
\[
\widehat P_{N,\Delta}(x,B):=\widehat Q_{N,\Delta}(B-x\mid x),\qquad B\in\mathcal B(\mathbb R^d).
\]
By the Markov property of \(X\) and the recursive construction of the generated path defined by \eqref{eq:construction-new-path}, the laws of the true discretely observed path \((X_{t_0},\ldots,X_{t_M})\) and the generated path \((\widehat X_{t_0},\ldots,\widehat X_{t_M})\), denoted  respectively by \(P_{0:M}\) and \(\widehat P_{0:M}^{\,N}\), can be written as
\begin{align}
\label{eq:true-path-law}
    P_{0:M}(dz_0,\ldots,dz_M)
    &=
    \mu_0(dz_0)
    \prod_{m=0}^{M-1}
    \bar P_\Delta(z_m,dz_{m+1}),
    \\
\label{eq:generated-path-law}
    \widehat{P}_{0:M}^{\,N}(dz_0,\ldots,dz_M)
    &=
    \mu_0(dz_0)
    \prod_{m=0}^{M-1}
    \widehat{P}_{N,\Delta}(z_m,dz_{m+1}).
\end{align}






\begin{lemma}
\label{lem:path-kl-decomposition}
Conditionally on $\mathcal D_N$,
\begin{equation}
\label{eq:path-kl-decomposition}
\mathrm{KL}\left(
P_{0:M}\middle\|\widehat P_{0:M}^{\,N}
\right)
=
\sum_{m=0}^{M-1}
\int_{\mathbb R^d}
\mathrm{KL}\left(
P_\Delta(x,\cdot)
\middle\|
\widehat Q_{N,\Delta}(\cdot\mid x)
\right)\,\mu_m(dx).
\end{equation}
\end{lemma}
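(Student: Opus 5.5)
We will use the chain rule for the KL divergence applied to the Markov factorizations \eqref{eq:true-path-law} and \eqref{eq:generated-path-law}, and then reduce transition-kernel divergences to increment-kernel divergences via translation invariance. Throughout we work conditionally on $\mathcal D_N$, so that $\widehat P_{N,\Delta}$ is a fixed Markov kernel. All integrands below are nonnegative, so sums and integrals may be exchanged freely, with the convention that both sides may equal $+\infty$.

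\textbf{Step 1: chain rule.} Both path laws share the initial marginal $\mu_0$. We argue by induction on $M$, using the disintegration identity
\[
\mathrm{KL}\big(\mu\otimes K\,\big\|\,\mu\otimes \widehat K\big)=\int \mathrm{KL}\big(K(z,\cdot)\,\|\,\widehat K(z,\cdot)\big)\,\mu(dz),
\]
valid for probability measures $\mu$ and Markov kernels $K,\widehat K$ on Polish spaces. Here $\mu\otimes K$ denotes the joint law $\mu(dz)K(z,dz')$. To prove the identity, note that if $K(z,\cdot)\ll\widehat K(z,\cdot)$ for $\mu$-a.e.\ $z$, then a jointly measurable version of the density $\frac{dK(z,\cdot)}{d\widehat K(z,\cdot)}(z')$ exists on a standard Borel space. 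This version is a density of $\mu\otimes K$ with respect to $\mu\otimes\widehat K$, and Fubini gives the formula. If absolute continuity fails on a set of positive $\mu$-measure, both sides are $+\infty$.

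We apply this identity with $\mu$ equal to the law of $(z_0,\dots,z_m)$ under $P_{0:m}$, with $K=\bar P_\Delta(z_m,\cdot)$, and with $\widehat K=\widehat P_{N,\Delta}(z_m,\cdot)$. Both kernels depend only on the last coordinate. We also use the general chain rule: for a joint law $\pi\otimes K$ against $\widehat\pi\otimes\widehat K$, the divergence equals $\mathrm{KL}(\pi\|\widehat\pi)$ plus the averaged kernel divergence. Iterating from $m=0$ to $M-1$, and using $\mathrm{KL}(\mu_0\|\mu_0)=0$ together with the fact that the $z_m$-marginal of $P_{0:M}$ is $\mu_m$, we obtain
\[
\mathrm{KL}\big(P_{0:M}\|\widehat P_{0:M}^{\,N}\big)=\sum_{m=0}^{M-1}\int \mathrm{KL}\big(\bar P_\Delta(x,\cdot)\,\|\,\widehat P_{N,\Delta}(x,\cdot)\big)\,\mu_m(dx).
\]

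\textbf{Step 2: translation.} For fixed $x$, the map $\phi_x(y)=x+y$ is a measurable bijection with measurable inverse. By the definitions, $\bar P_\Delta(x,\cdot)$ is the pushforward of $P_\Delta(x,\cdot)$ under $\phi_x$, and $\widehat P_{N,\Delta}(x,\cdot)$ is the pushforward of $\widehat Q_{N,\Delta}(\cdot\mid x)$ under $\phi_x$. KL divergence is invariant under bijective bimeasurable maps: the data-processing inequality applied to $\phi_x$ and to $\phi_x^{-1}$ gives both inequalities. Hence $\mathrm{KL}(\bar P_\Delta(x,\cdot)\|\widehat P_{N,\Delta}(x,\cdot))=\mathrm{KL}(P_\Delta(x,\cdot)\|\widehat Q_{N,\Delta}(\cdot\mid x))$. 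Substituting this into Step 1 yields \eqref{eq:path-kl-decomposition}.

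\textbf{Main obstacle.} The only delicate point is measurability. The map $x\mapsto \mathrm{KL}(P_\Delta(x,\cdot)\|\widehat Q_{N,\Delta}(\cdot\mid x))$ must be measurable, and the case where absolute continuity fails must be handled. To get measurability, we will use the Donsker--Varadhan variational formula, taking the supremum over a countable family of bounded continuous test functions; this gives measurability on $\mathbb R^d$. The infinite case is covered by the convention in the disintegration identity.
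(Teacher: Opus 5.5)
Your proposal is correct and follows the paper's own argument. You apply the chain rule for KL to the Markov factorizations \eqref{eq:true-path-law}--\eqref{eq:generated-path-law}, which share the initial law $\mu_0$, and then use invariance of KL under the translation $y\mapsto x+y$ to pass from $\bar P_\Delta,\widehat P_{N,\Delta}$ to $P_\Delta,\widehat Q_{N,\Delta}$; you add measure-theoretic details the paper leaves implicit, and the only small caveat is that your induction for $m\ge1$ relies on the general chain rule with distinct first marginals $P_{0:m}$ and $\widehat P_{0:m}^{\,N}$, which you invoke but do not prove, rather than on the equal-marginal identity you actually verify.
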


\begin{proof}[Proof of Lemma \ref{lem:path-kl-decomposition}]
Fix a realization of $\mathcal D_N$.
By \eqref{eq:true-path-law} and \eqref{eq:generated-path-law},
the two path laws have the same initial distribution $\mu_0$.
The chain rule for KL divergence, together with the invariance of
KL divergence under the translation $y\mapsto x+y$, implies 
\[
\mathrm{KL}\left(
P_{0:M}\middle\|\widehat P_{0:M}^{\,N}
\right)=\sum_{m=0}^{M-1}
\int_{\mathbb R^d}
\mathrm{KL}\left(
\bar P_\Delta(x,\cdot)
\middle\|
\widehat P_{N,\Delta}(x,\cdot)
\right)\,\mu_m(dx)
=
\sum_{m=0}^{M-1}
\int_{\mathbb R^d}
\mathrm{KL}\left(
P_\Delta(x,\cdot)
\middle\|
\widehat Q_{N,\Delta}(\cdot|x)
\right)\,\mu_m(dx).\qedhere
\]
\end{proof}

\begin{proof}[Proof of Theorem~\ref{thm:path-kl-bound}]
Throughout this proof, $C>0$ denotes a constant depending only on
the fixed parameters $a_0$, $b_0$, and $\mathcal T$ and
its value may change from line to line.

Fix a realization of $\mathcal D_N$.
We apply the estimate proved in
\citet[Proposition~2]{gao2025generating}
with conditioning variable $X\sim\bar\mu_M$, target law
$P_\Delta(X,\cdot)$, terminal time $\mathcal T$, step size $h$,
and score estimator $s_{\hat\theta_N}$.
The sampler in \eqref{eq:reverse-euler-sampler}
coincides with the Euler scheme used in \citet[Proposition~2]{gao2025generating}.
Assumption~\ref{ass:increment-density-regularity} supplies the
density and Lipschitz-score conditions for $\bar\mu_M$-almost every $x\in\mathbb R^d$, and
Lemma~\ref{lem:coefficient-increment-integrability} gives $M_2<\infty$.
The score error in the cited estimate is
$\mathcal E(s_{\hat\theta_N})$.
By \eqref{eq:conditional-score-identity},
conditional Jensen's inequality, and the network constraint,
we have
\[
\mathcal E(s_{\hat\theta_N})
\le 2dF^2+2\rho_{\min}^{-2}d<\infty.
\]
Applying the cited estimate, we obtain
\begin{align*}
&\mathbb E_{X\sim\bar\mu_M}\left[
\mathrm{KL}\left(
P_\Delta(X,\cdot)
\middle\|
\widehat Q_{N,\Delta}(\cdot\mid X)
\right)\right]
\\
&\quad\le C\bigg\{
\alpha_{\mathcal T}^2
\left(M_2+d\alpha_{\mathcal T}^2\right)
+(\mathcal T+1)^3\mathcal E(s_{\hat\theta_N})
+h(M_2+d)(\mathcal T+1)^5
+M_2h(\mathcal T+1)^2
\\
&\qquad\qquad
+dh\left[L_1+(\mathcal T+1)^4\right]
+dh^2(L_1+\mathcal T+1)
\bigg\}.
\end{align*}
Using
\[
\alpha_{\mathcal T}^2
=\exp\left(-\frac{a_0}{2}\mathcal T^2-b_0\mathcal T\right)
\le1,
\qquad
h=\mathcal T/K\le\mathcal T,
\]
and, in particular,
\[
dh^2(L_1+\mathcal T+1)
\le dh(\mathcal T+1)^2(1+L_1),
\]
we obtain
\begin{equation}
\begin{aligned}
&\mathbb E_{X\sim\bar\mu_M}\left[
\mathrm{KL}\left(
P_\Delta(X,\cdot)
\middle\|
\widehat Q_{N,\Delta}(\cdot\mid X)
\right)\right]
\le
C\left\{
\eta_{\mathcal T,h}\left[M_2+d(1+L_1)\right]
+(\mathcal T+1)^3\mathcal E(s_{\hat\theta_N})
\right\}. \nonumber
\end{aligned}
\end{equation}

By Lemma~\ref{lem:path-kl-decomposition}, Tonelli's theorem,
and $\bar\mu_M=M^{-1}\sum_{m=0}^{M-1}\mu_m$,
\begin{align*}
\mathcal R_N
&=
\sum_{m=0}^{M-1}
\mathbb E_{\mathcal D_N}\left[
\int_{\mathbb R^d}
\mathrm{KL}\left(
P_\Delta(x,\cdot)
\middle\|
\widehat Q_{N,\Delta}(\cdot\mid x)
\right)\,\mu_m(dx)
\right]
\\
&=
M\mathbb E_{\mathcal D_N}
\mathbb E_{X\sim\bar\mu_M}\left[
\mathrm{KL}\left(
P_\Delta(X,\cdot)
\middle\|
\widehat Q_{N,\Delta}(\cdot\mid X)
\right)\right].
\end{align*}
Taking expectations in \eqref{eq:one-step-kl-bound} therefore gives
\[
\mathcal R_N
\le CM\left\{
\eta_{\mathcal T,h}\left[M_2+d(1+L_1)\right]
+(\mathcal T+1)^3
\mathbb E_{\mathcal D_N}\left[\mathcal E(s_{\hat\theta_N})\right]
\right\}.
\]
Applying Theorem~\ref{thm:score-error-bound} concludes the proof.
\end{proof}


\section{Additional Experimental Details}

\subsection{Additional Details for Section \ref{subsec:numerical-cvg}}

\paragraph{Simulation of training, validation, and test data.} Training, validation, and test paths are simulated independently from the same jump-diffusion process. Each path is started from $X_0\sim\mathcal{N}(0,I_d)$ and evolved for $100$ Euler steps of size $\Delta=0.01$ and $T=1$. For a given dimension $d$ and training size $N\in\{100,200,500,1000,2000,5000\}$, we draw $N$ training paths, $3000$ validation paths, and $3000$ test paths. Conditions and increments are standardized using the training mean and standard deviation. Generated paths used for evaluation are rolled out from an independent set of $3000$ initial states drawn from $\mathcal{N}(0,I_d)$. We repeat this protocol over $20$ independent replications. Our estimator and the estimator of \cite{gao2025generating} are trained and evaluated on the same simulated paths in each replication.

\paragraph{Hyperparameter tuning.}
Our estimator and the estimator of \cite{gao2025generating} are trained for at most $256$ epochs using Adam with a learning rate of $10^{-3}$. Conditions and increments are standardized using the mean and standard deviation computed from the training set. Validation performance is evaluated every $4$ epochs, and the checkpoint with the best validation score is retained. The validation score is the DSM loss computed with a frozen noise draw. Our estimator is trained on NVIDIA GeForce RTX~5090 GPUs using minibatches of size $256$.  

Since the official code for \cite{gao2025generating} is not publicly available,  we implement their estimator following the description in their paper.  As specified therein, we train one conditional score network for each time step, using an MLP with hidden widths $(32,32,16,16)$ and Softplus activations. Checkpoints are selected by the $k$NN KL divergence between real and generated paths, following the protocol in \cite{gao2025generating}.

\paragraph{Detailed values of Figure \ref{fig:our-method-vs-gao-2}.}
Table~\ref{tab:kl-our-vs-gao} reports the numerical values plotted in Figure~\ref{fig:our-method-vs-gao-2}. For each replication, we sample $20/d$ distinct times without replacement from the $100$ time steps and concatenate the corresponding $d$-dimensional states to form a $20$-dimensional joint feature. We repeat this random selection $100$ times and compute, for each selection, the $k$NN estimate of $\mathrm{KL}(\mathrm{test}\Vert\mathrm{generated})$ with $k=1$. Negative $k$NN estimates are truncated at zero, and the resulting $100$ estimates are then averaged to obtain the KL estimate for that replication. Each table entry reports the mean of these replication-level estimates over $20$ independent replications, together with a $95\%$ Student's $t$ confidence interval.


\begin{table}[ht]
\centering
\caption{Joint KL estimates (mean $\pm$ $95\%$ CI half-width) for our estimator and the estimator of \cite{gao2025generating}.}
\label{tab:kl-our-vs-gao}
\begin{tabular}{crrcc}
\hline
$d$ & $N$ & Our estimator & Estimator of~\cite{gao2025generating} \\
\hline
$1$ & $100$  & $0.165\pm 0.093$ & $3.265\pm 0.153$ \\
$1$ & $200$  & $0.140\pm 0.086$ & $3.120\pm 0.186$ \\
$1$ & $500$  & $0.087\pm 0.037$ & $3.055\pm 0.127$ \\
$1$ & $1000$ & $0.095\pm 0.032$ & $2.923\pm 0.106$ \\
$1$ & $2000$ & $0.075\pm 0.033$ & $2.685\pm 0.069$ \\
$1$ & $5000$ & $0.050\pm 0.019$ & $2.333\pm 0.065$ \\
\hline
$2$ & $100$  & $0.429\pm 0.084$ & $4.180\pm 0.148$ \\
$2$ & $200$  & $0.281\pm 0.068$ & $4.092\pm 0.118$ \\
$2$ & $500$  & $0.248\pm 0.085$ & $3.828\pm 0.119$ \\
$2$ & $1000$ & $0.202\pm 0.052$ & $3.683\pm 0.113$ \\
$2$ & $2000$ & $0.138\pm 0.038$ & $3.597\pm 0.076$ \\
$2$ & $5000$ & $0.141\pm 0.038$ & $3.411\pm 0.078$ \\
\hline
\end{tabular}
\end{table}

Table~\ref{tab:kl-our-vs-gao} shows that our estimator has lower mean KL estimates and narrower confidence intervals in all settings. 

\subsection{Additional Details for Section \ref{subsec:real-data}}

We consider the following additional datasets for the forecasting experiments.
\begin{itemize}
\item Stock:  daily closing prices from the stock series distributed with TimeGAN from \cite{NEURIPS2019_c9efe5f2}.
\item Energy: appliance and light energy use from the UCI Appliances Energy Prediction data from \cite{appliances_energy_prediction_374},
recorded every ten minutes from 11~January 2016 to 27~May 2016.
\end{itemize}

Forecasting results for these datasets, together with those for SOL and XRP in the \textbf{Crypto} dataset, are shown in Figure~\ref{fig:real-appendix}.

\begin{figure}[ht]
    \centering
    \begin{subfigure}{0.48\linewidth}
        \centering
        \includegraphics[width=0.9\linewidth]{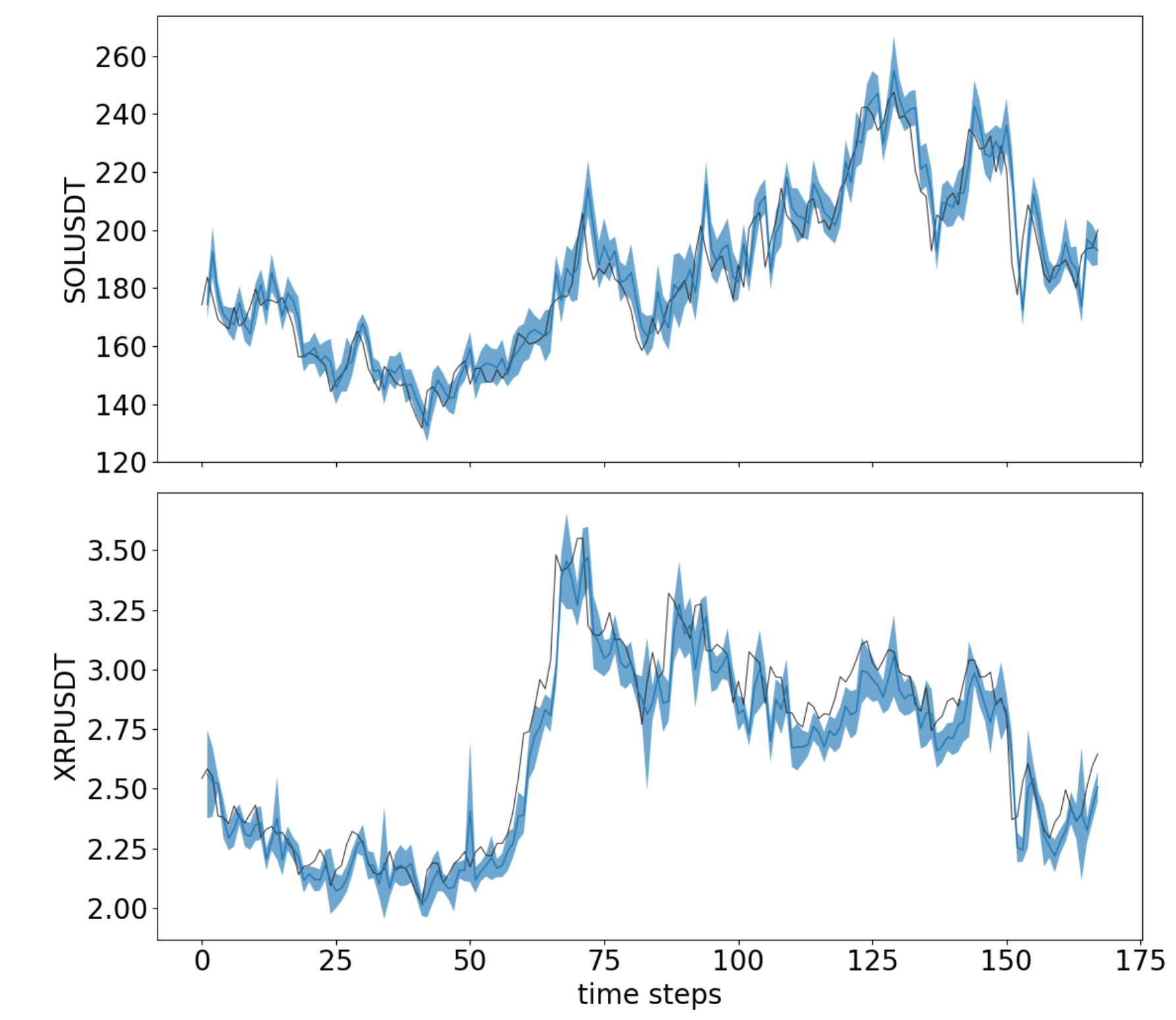}
        \caption{Crypto.}
        \label{fig:flat-crypto}
    \end{subfigure}
 \begin{subfigure}{0.48\linewidth}
        \centering
        \includegraphics[width=0.9\linewidth]{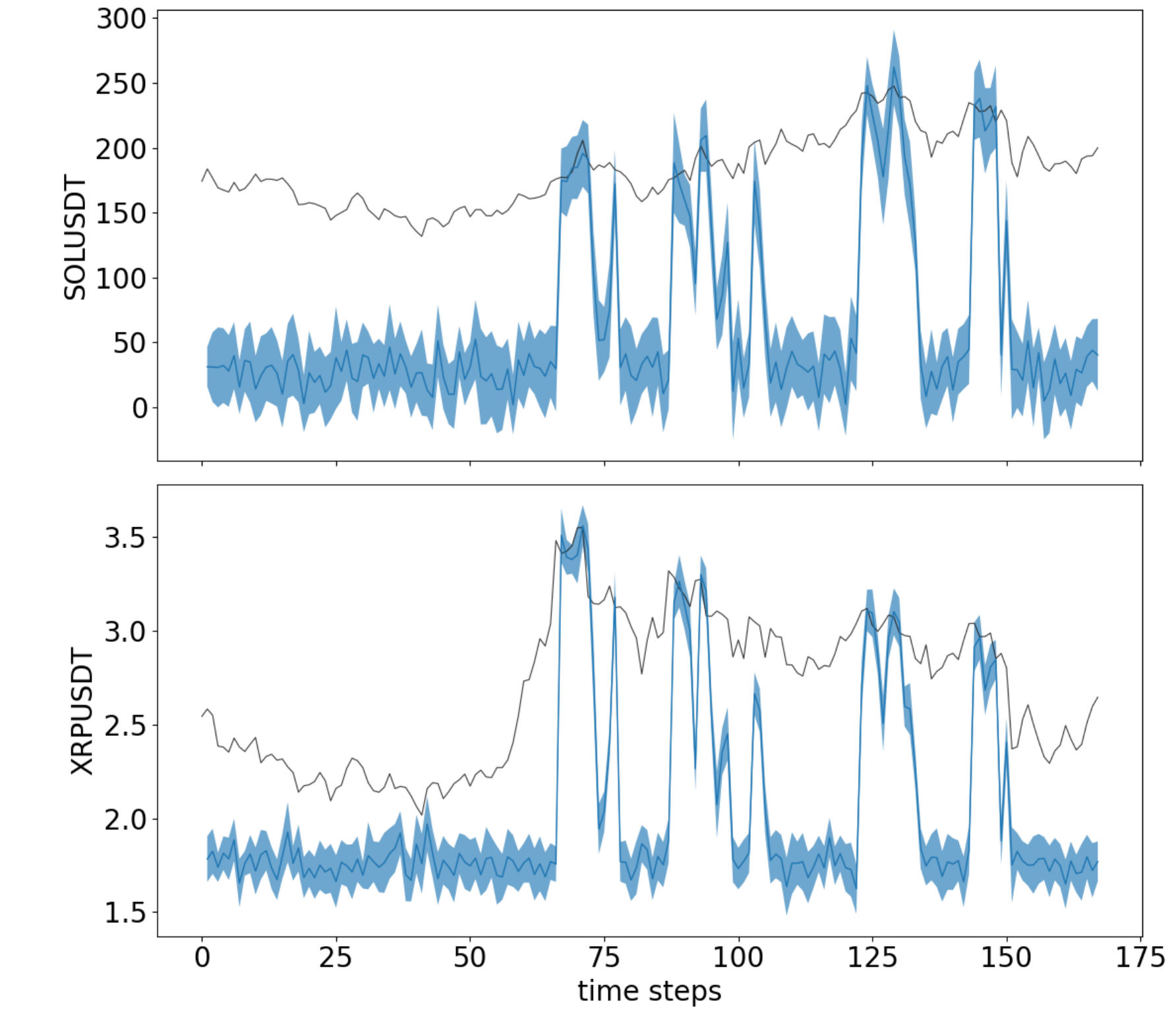}
        \caption{Crypto.}
        \label{fig:sbjts-crypto}
    \end{subfigure}

    \begin{subfigure}{0.48\linewidth}
        \centering
        \includegraphics[width=0.9\linewidth]{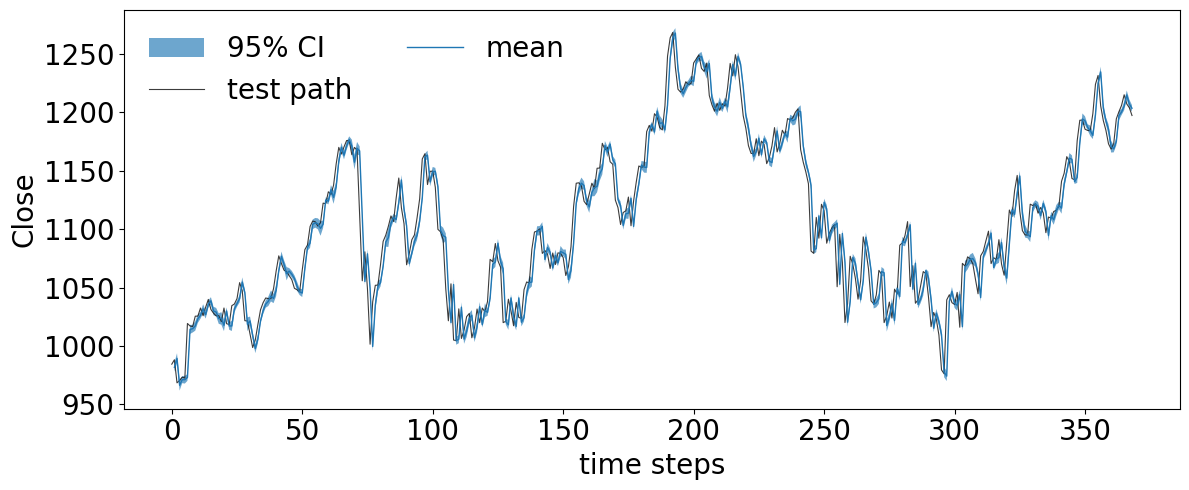}
        \caption{Stock.}
        \label{fig:flat-stock}
    \end{subfigure}
 \begin{subfigure}{0.48\linewidth}
        \centering
        \includegraphics[width=0.9\linewidth]{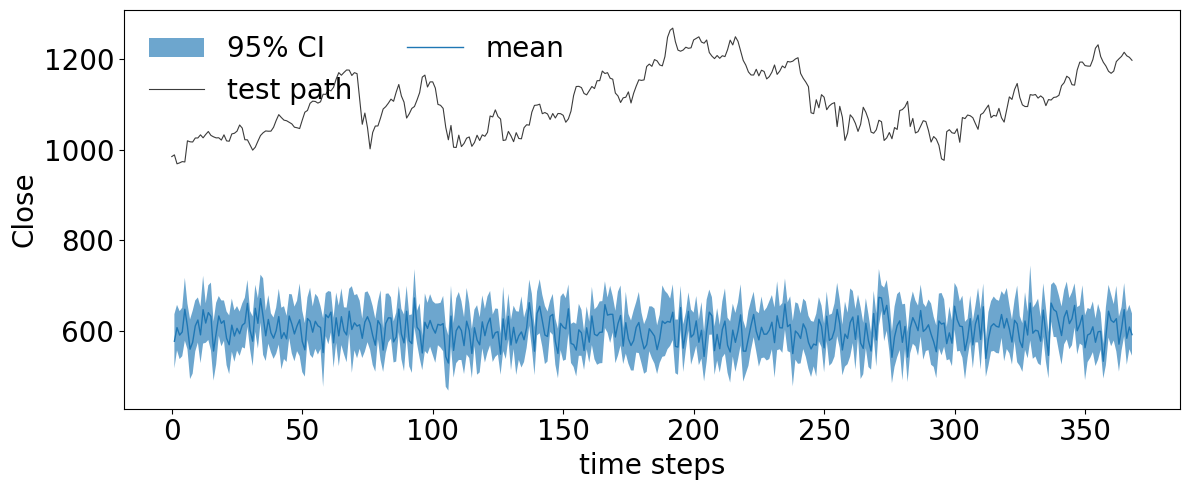}
        \caption{Stock.}
        \label{fig:sbjts-stock}
    \end{subfigure}

    \begin{subfigure}{0.48\linewidth}
        \centering
        \includegraphics[width=0.9\linewidth]{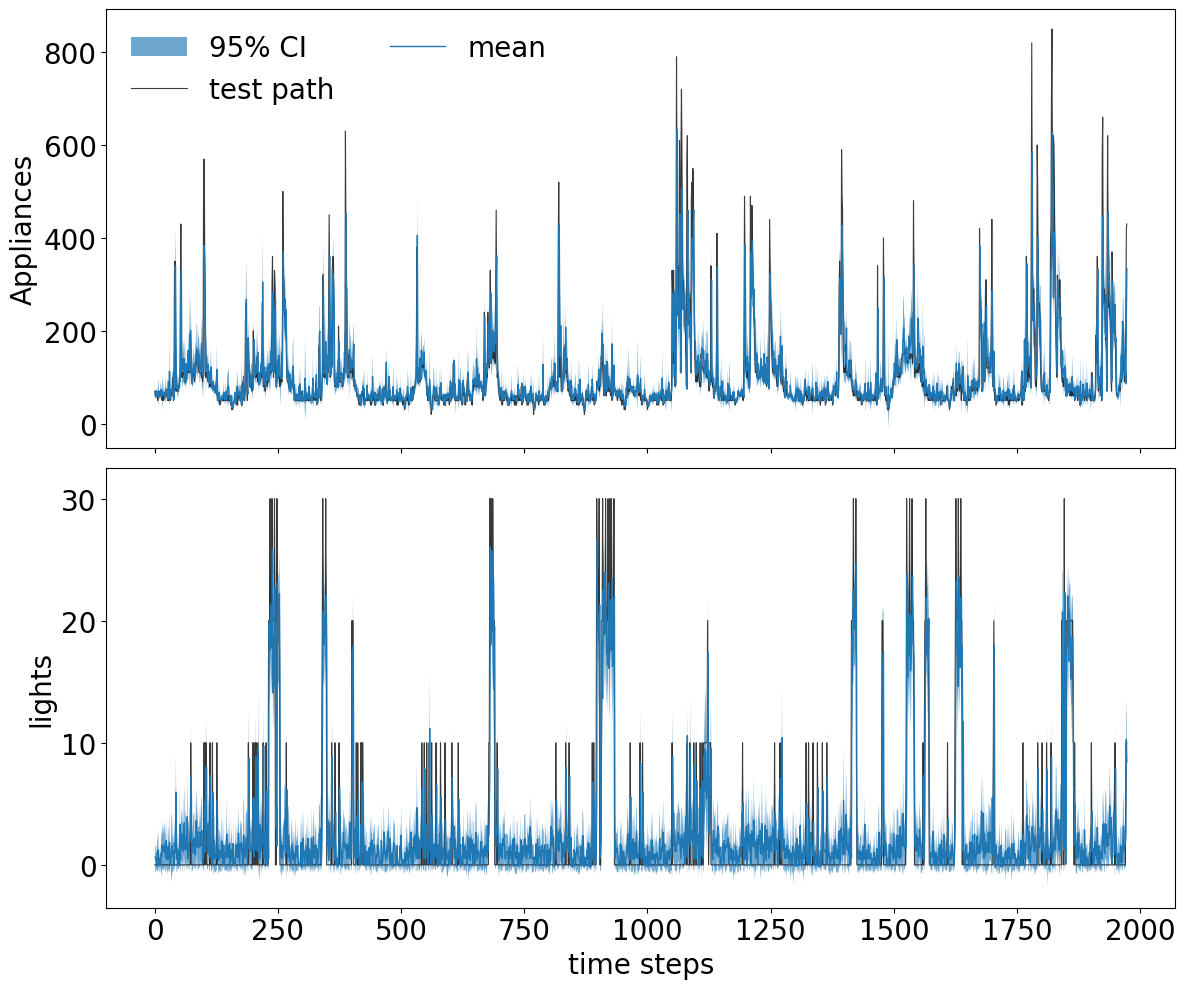}
        \caption{Energy.}
        \label{fig:flat-energy}
    \end{subfigure}
      \begin{subfigure}{0.48\linewidth}
        \centering
        \includegraphics[width=0.9\linewidth]{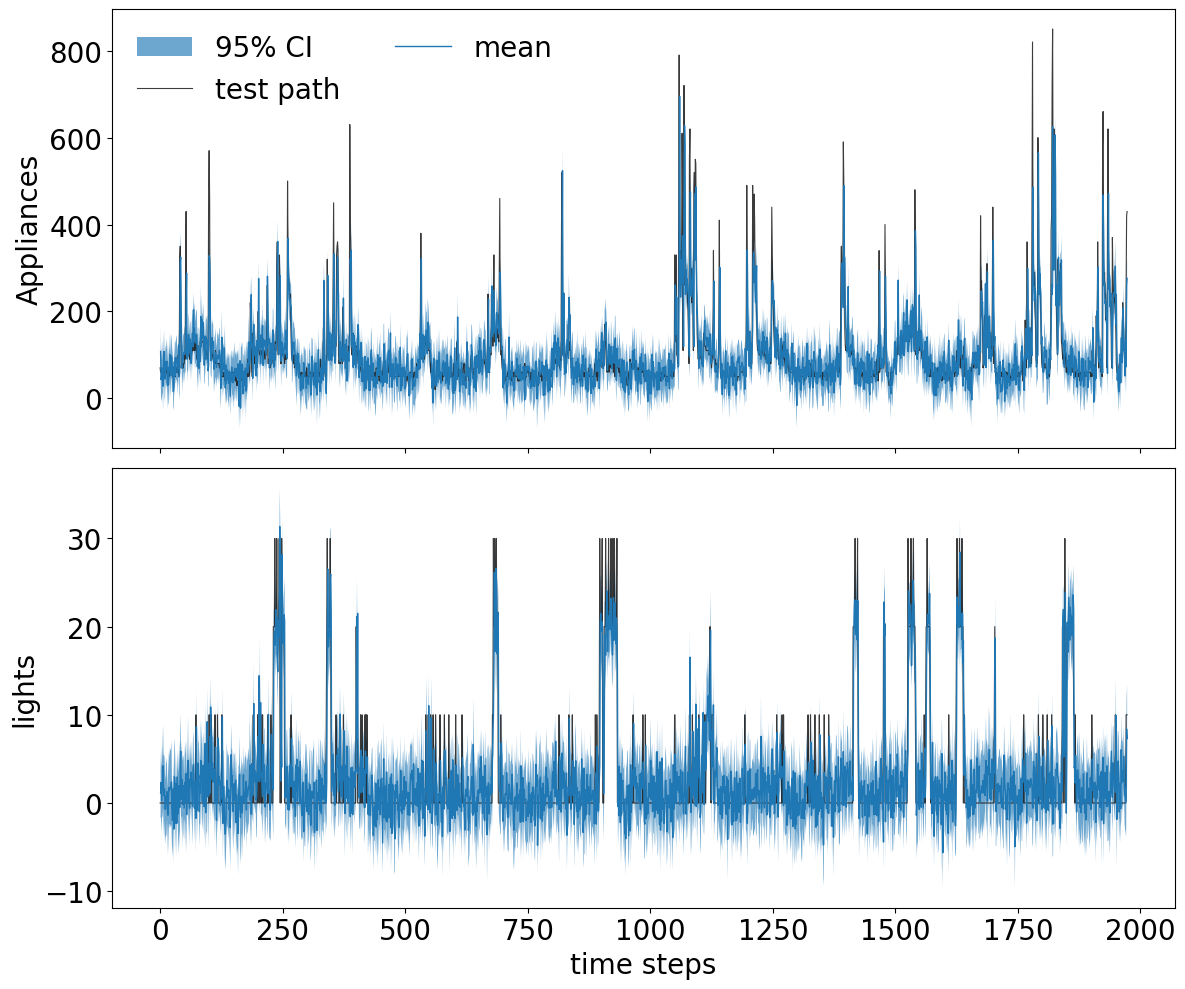}
        \caption{Energy.}
        \label{fig:sbjts-energy}
    \end{subfigure}
    \caption{One-step forecasts of our estimator (left) and time-homogeneous SBJTS estimator from \cite{demarco2026} (right) on the test segment.}
    \label{fig:real-appendix}
\end{figure}


\paragraph{Additional Details on Hyperparameter Tuning.} Our estimator uses the same hyperparameter space and training procedure as in Section~\ref{subsec:numerical-cvg}.
For the SBJTS estimator of \cite{demarco2026}, we estimate $\lambda_0$ from the empirical increment variance. The bandwidth $h$ is selected from $\{0.1,1.0,5.0\}$  by minimizing
the mean squared error on the validation segment, with the remaining SBJTS coefficients held at $\sigma=\gamma=1$, $c=0$, $T=1$, $n_{\pi}=1$, and $n_j=20$.

\paragraph{Experimental results for \textbf{Stock} Dataset.}
Figures~\ref{fig:flat-stock} and~\ref{fig:sbjts-stock} show one-step forecasts on the test segment of the one-dimensional closing price series. Our estimator tracks the observed path, with a narrow band around the test trajectory. The SBJTS estimator stays at a lower level and does not track the test trajectory.

\paragraph{Experimental results for \textbf{Energy} Dataset.}
Figures~\ref{fig:flat-energy} and~\ref{fig:sbjts-energy} show one-step forecasts on the test segment of the two-dimensional (appliances and lights) time series. Our estimator follows the spike pattern of both series. The SBJTS forecasts are noisier and the predictive band extends below zero on the lights coordinate.

\paragraph{Data sources and licensing.}
\begin{itemize}
    \item The \textbf{Crypto} prices are obtained from the Binance
Spot API.
Binance's documentation refers users to its applicable
terms of use. 
\item The \textbf{Danish} records are obtained from the
\texttt{danishuni} dataset in the \texttt{fitdistrplus} package
\citep{delignette2015fitdistrplus}.
The package is distributed under the GNU General Public
License, version 2 or later; its dataset documentation
does not specify a separate data license.
\item 
The \textbf{Stock} series is distributed with the TimeGAN
codebase \citep{NEURIPS2019_c9efe5f2}, whose documentation
identifies Yahoo Finance as the original data source.
The repository includes an Apache License 2.0 notice;
we did not identify a separate license for the underlying
stock data.
\item The \textbf{Energy} dataset
\citep{appliances_energy_prediction_374} is distributed by the
UCI Machine Learning Repository under the Creative Commons
Attribution 4.0 International license (CC BY 4.0).
\end{itemize}

\end{document}